%% file: templateArxiv.tex
\documentclass{article}

\usepackage{PRIMEarxiv}
\usepackage[utf8]{inputenc} 
\usepackage[T1]{fontenc}    
\usepackage{microtype}      
\usepackage{nicefrac}       
\usepackage{enumitem}       
\usepackage{lipsum}         

\usepackage[table]{xcolor}
\usepackage{booktabs}       
\usepackage{multirow}       
\usepackage{makecell}       
\usepackage{adjustbox}      

\usepackage{amsmath}
\usepackage{amssymb}
\usepackage{mathtools}
\usepackage{amsfonts}       
\usepackage{amsthm}

\usepackage{graphicx}       
\usepackage{subcaption}     
\graphicspath{{media/}}     

\usepackage{natbib}         
\usepackage[textsize=tiny]{todonotes} 

\usepackage{url}            
\usepackage{hyperref}       
\usepackage[capitalize,noabbrev]{cleveref} 

\theoremstyle{plain}
\newtheorem{theorem}{Theorem}[section]

\newtheorem{corollary}[theorem]{Corollary}
\theoremstyle{definition}

\theoremstyle{remark}

\title{Uni-Synergy: Bridging Understanding and Generation for Personalized Reasoning via Co-operative Reinforcement Learning}

\author{
  Zijun Shen$^{1,2 *}$, \quad Sihan Yang$^{1,*}$, \quad Ruichuan An$^{1, * \dagger}$, \quad Ziyu Guo$^{3}$, \\
  \textbf{Hao Liang}$^{1,4}$, \quad \textbf{Ming Lu}$^{1}$, \quad \textbf{Renrui Zhang}$^{3}$, \quad \textbf{Wentao Zhang}$^{1,4\ddagger}$ \\
  \vspace{10pt} \\
   $^1$Peking University \quad $^2$Nanjing University \quad $^3$CUHK \quad $^4$Zhongguancun Academy\\
  \vspace{10pt} \\
  \small $^*$ Equal contribution \quad $^\dagger$ Project leader \quad $^\ddagger$ Corresponding author \\
  \small \texttt{231300043@smail.nju.edu.cn, wentao.zhang@pku.edu.cn} 
}

\begin{document}
\maketitle
\begin{abstract}
Unified Multimodal Models (UMMs) excel in general tasks but struggle to bridge the gap between personalized understanding and generation. Prior works largely rely on implicit token-level alignment via supervised fine-tuning, which fails to fully capture the potential synergy between comprehension and creation. In this work, we propose Sync-R1, an end-to-end reinforcement learning framework that jointly optimizes personalized understanding and generation within a single, explicit reasoning loop. Through this unified feedback process, Sync-R1 enables personalized comprehension to guide content creation, while the resulting generation quality reciprocally refines understanding within an integrated reward landscape. To efficiently orchestrate this dual-task synergy, we introduce Sync-GRPO, a reinforcement learning method utilizing an ensemble reward system. Furthermore, we propose Dynamic Group Scaling (DGS), which adaptively filters low-potential trajectories to reduce gradient variance and accelerate convergence. To better reflect real-world complexity, we introduce UnifyBench++, featuring denser textual descriptions and richer user contexts. Experimental results demonstrate that Sync-R1 achieves state-of-the-art performance, showcasing superior cross-task reasoning and robust personalization without requiring complex cold-start procedures. The code and the UnifyBench++ dataset will be released at: \url{https://github.com/arctanxarc/UniCTokens}.
\end{abstract}
\begin{figure*}[t]
    \centering
    \includegraphics[width=1\textwidth]
    {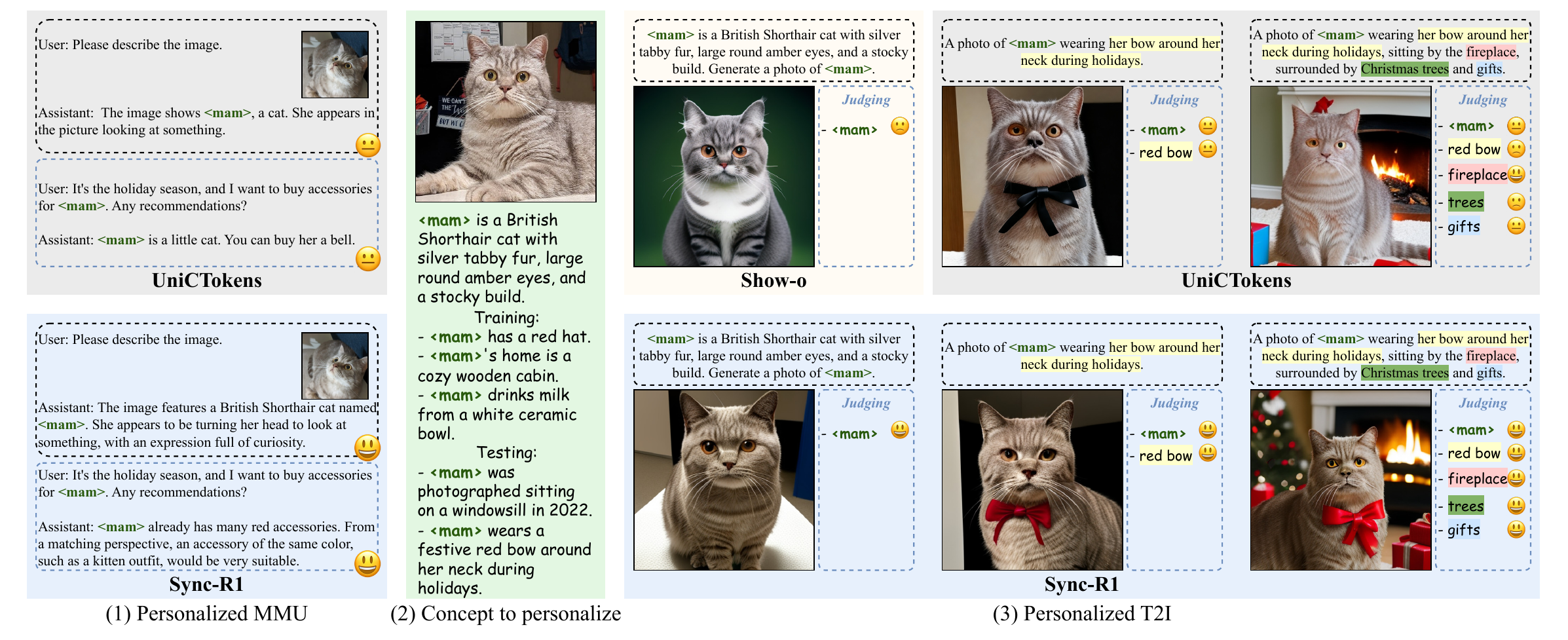}
    
\caption{\textbf{Capability Overview of Sync-R1.} Our framework jointly optimizes personalized understanding and generation within a unified reasoning trajectory via Sync-GRPO. By establishing an explicit synergistic loop, Sync-R1 leverages the reciprocal enhancement between comprehension and creation to achieve precise integration of personalized concepts. Beyond standard personalization, Sync-R1 excels in challenging scenarios, including dense-text and reasoning-based generation tasks with missing concept information.}
    \label{fig:fig0}
    \vspace{-5mm}
\end{figure*}


\section{Introduction} \label{sec:intro}
The rapid evolution of Unified Vision-Language Models (ULM)~\citep{wu2024nextgptanytoanymultimodalllm,jiang2025coreinforcementlearningunifiedmultimodal,li2025manzanosimplescalableunified,xie2025show,deng2025emergingpropertiesunifiedmultimodal} has established a powerful paradigm for general-purpose multimodal AI,
yet the potential of unified models to execute complex, personalized reasoning remains largely under-explored. Real-world tasks typically demand a complex interplay between understanding user-specific contexts and generating personalized content. Existing approaches often rely on implicit token-level sharing or simplistic multi-task supervised fine-tuning (SFT)~\citep{zhong2026unifiedpersonalizedunderstandinggenerating}, failing to capture deep contextual nuances \citep{nguyen2025yochameleonpersonalizedvisionlanguage}. These approaches either treat understanding and generation as disjoint objectives~\citep{tian2025unigenenhancedtraining} or demand high-cost training pipelines that fail to capture the deep synergistic relationship between tasks. Consequently, two critical challenges remain: (1) How to establish an explicit mechanism where personalized understanding and generation mutually enhance one another~\citep{ye2026understandingvsgenerationnavigating}, and (2) How to achieve such coordination efficiently within a unified framework.

To address these challenges, we propose \textbf{Sync-R1}, an end-to-end reinforcement learning framework that achieves explicit and synergistic co-optimization of personalized understanding and generation. Unlike previous methods that rely on latent embedding alignment~\citep{an2025unictokensboostingpersonalizedunderstanding}, we bridge the task gap by orchestrating a unified reasoning trajectory~\citep{penha2025semanticidsjointgenerative}: the model first extracts fine-grained conceptual information from user-provided context (\textit{understanding}) to guide the subsequent creation of personalized content (\textit{generation}). Crucially, this trajectory is optimized holistically under a composite reward function that evaluates both the fidelity of the understanding output and the quality of the final generation. This ensures that the feedback from generation substantially informs and improves the understanding process. Thus, we establish a reciprocal feedback loop where comprehension provides the semantic foundation for generation, and generation provides the objective signal to calibrate comprehension.

While the synergistic potential between understanding and generation is well-recognized, effectively coordinating these distinct reasoning objectives necessitates a sophisticated optimization strategy. To this end, we propose \textbf{Sync-GRPO}, a reinforcement learning method designed to jointly optimize both tasks within a unified loop. We advocate for RL over supervised fine-tuning (SFT) based on two considerations: \textbf{(1) Eliciting Latent Synergy:} Instead of redundantly re-teaching skills via SFT, RL is uniquely suited to elicit and coordinate the robust foundational capabilities already inherent in UMMs~\citep{chen2025synergydilemmalongcotsft,liang2025mm}. This approach enables the discovery of optimal integration strategies through adaptive self-improvement. \textbf{(2) Enhancing Coherent Reasoning:} Leveraging RL's proven efficacy in complex reasoning~\citep{bai2022traininghelpfulharmlessassistant,hu2025openreasonerzeroopensourceapproach}, we design rewards to explicitly incentivize logical alignment between understanding and generation. This steers the model toward coherent, contextually grounded personalization.

To address the prohibitive computational cost of multimodal RL, we further introduce \textbf{Dynamic Group Scaling (DGS)}. This adaptive sampling mechanism dynamically prunes low-potential trajectories in the initial stages of generation, substantially reducing gradient variance and accelerating convergence while maintaining high sample quality for reward estimation. Moreover, to establish a more rigorous evaluation benchmark for personalized reasoning, we extend UnifyBench~\citep{an2025unictokensboostingpersonalizedunderstanding} to \textbf{UnifyBench++}, enriching it with denser textual descriptions and more diverse user contexts~\citep{an2026geniusgenerativefluidintelligence}. Extensive experiments demonstrate that Sync-R1 achieves state-of-the-art performance in all the tasks, exhibiting robust cross-task synergy without requiring costly cold‑start fine‑tuning.

Our primary contributions are summarized as follows: 
\begin{itemize}[noitemsep, topsep=0pt,leftmargin=*]
\item We propose Sync-R1, a unified RL framework establishing an explicit reasoning loop to synergistically co-optimize personalized understanding and generation.
\item We develop Sync-GRPO and DGS to effectively and efficiently coordinate the joint optimization of personalized understanding and generation tasks. 
\item We introduce UnifyBench++, an enhanced benchmark for robust evaluation. Extensive experiments show that Sync-R1 sets a new state of the art, demonstrating superior cross-task synergy without costly cold-start procedures.
\end{itemize}
\begin{figure*}[t]
\begin{center}
   \includegraphics[width=1\textwidth]
   {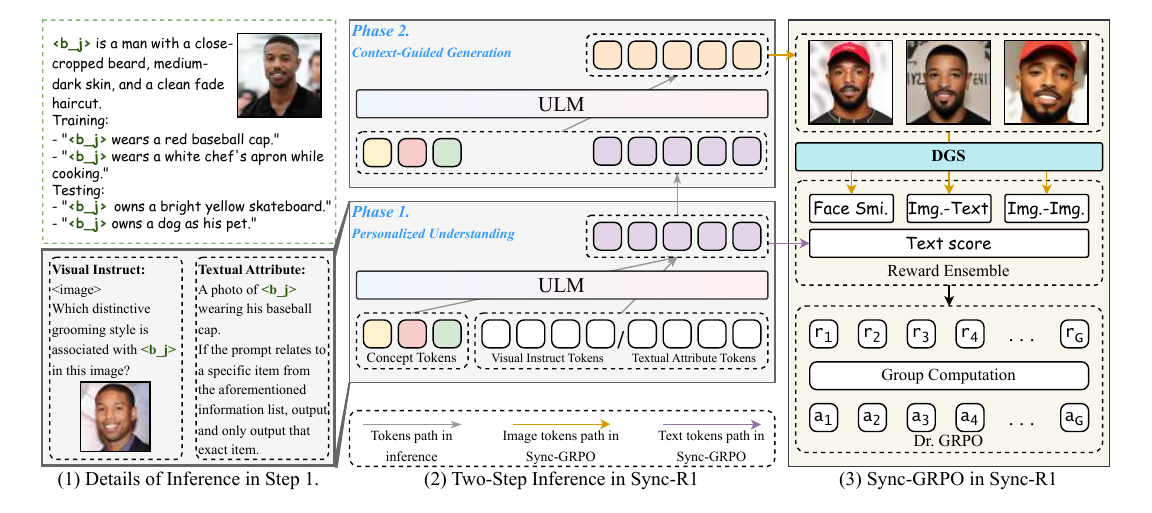}
\end{center}
\caption{\textbf{Overview of the Sync-R1 Framework.} We introduce a novel integration of personalized understanding and generation reasoning. Through our proposed \textbf{Sync-GRPO}, these two processes are coupled to mutually reinforce each other, jointly contributing to parameter updates. This mechanism significantly enhances the model's reasoning capabilities regarding personalized concepts and achieves high-fidelity information injection.}
\label{fig1}
\end{figure*}

\section{Related Work}
\paragraph{Reinforcement Learning for Multimodal Tasks.} Reinforcement Learning (RL) has established itself as a pivotal paradigm for enhancing model adaptability and complex task-specific reasoning~\citep{yuan2025whatspposcollapselongcot, yue2025vapoefficientreliablereinforcement}. In visual generation, RL is increasingly explored to transcend the limitations of supervised fine-tuning. Early efforts integrated Chain-of-Thought (CoT) reasoning into the generation process~\citep{chern2025thinkinggeneratedimages, pan2025selfreflectivereinforcementlearningdiffusionbased}, while the advent of Group Relative Policy Optimization (GRPO)~\citep{deepseekai2025deepseekr1incentivizingreasoningcapability} has catalyzed efficient RL-driven visual synthesis~\citep{tong2025delvingrlimagegeneration, han2025surveygenerativecategoriestechniques,xue2025dancegrpounleashinggrpovisual, xiao2025mindomniunleashingreasoninggeneration}. However, applying RL to unified multimodal models~\citep{mao2025unirlselfimprovingunifiedmultimodal, zhang2025reasongenr1cotautoregressiveimage}—specifically to foster explicit synergy between comprehension and creation—remains largely unexplored. Existing personalization methods, in contrast, often rely on isolated optimization or implicit cross-task transfer. Our work represents a pioneering effort to bring this synergistic RL paradigm into the personalization domain. To achieve this, we propose \textbf{Sync-R1}, which orchestrates personalized understanding and generation within an explicit, closed-loop reasoning trajectory.

\section{Method} \label{sec:method}
To facilitate the explicit synergy between personalized understanding and generation, we propose Sync-R1 (as shown in Figure~\ref{fig1}), an end-to-end GRPO reinforcement learning framework, which establishes an explicit reasoning loop where comprehension and creation are jointly optimized within a unified trajectory. In this section, we first define the dual-task personalized reasoning process, then introduce the Sync-GRPO optimization objective, and finally present the Dynamic Group Scaling (DGS) strategy designed to accelerate convergence through gradient variance reduction.

\subsection{Explicit Synergistic Reasoning}\label{subsec:reasoning} Existing works ~\citep{nguyen2025yochameleonpersonalizedvisionlanguage, an2025unictokensboostingpersonalizedunderstanding} typically achieve cross-task information sharing at the token level via multi-task supervised fine-tuning. However, this implicit alignment often creates semantic bottlenecks when processing complex, highly personalized contexts. To overcome this fundamental limitation, we decompose the personalized task into a structured two-phase explicit reasoning trajectory as detailed below:

\noindent \textbf{Phase I: Personalized Understanding.}
We treat personalized understanding not as a simple classification or captioning task, but as a reasoning process that internalizes personalized concept knowledge from heterogeneous inputs. We define two distinct reasoning pathways as depicted in Figure~\ref{fig1}(1): \textbf{(1) Visual Instruct Reasoning:} Given a reference image $I$ and a descriptive query $Q$, the model distills a reasoning result $IR$ that captures the fine-grained attributes of the concept: $I + Q \xrightarrow{\text{UMM}} IR_{vis}$. \textbf{(2) Textual Attribute Reasoning:} The model receives all extra attribute information and a reasoning prompt $IP$ from UnifyBench++ (see Appendix~\ref{app:dataset} for details). The model must then output the attribute that best conforms to the prompt: $Extra\_info + IP \xrightarrow{\text{UMM}} IR_{txt}$.

\noindent \textbf{Phase II: Context-Guided Generation.}
Based on the reasoning results $IR$ from Phase I, we construct a Compound Prompt $CP$ for each pathway to supervise the image generation process, ensuring a direct semantic flow from comprehension to creation. \textbf{(1) For Visual Instruct Reasoning:} We combine a basic prompt \(BP\) (e.g., ``a photo of \(\langle\text{sks}\rangle\)'', where $\langle \text{sks} \rangle$ a learnable unique identifier initialized via supervised fine-tuning.) with the visual reasoning result \(IR_{\text{vis}}\) to form the compound prompt: $CP_{\text{vis}} = BP \oplus IR_{\text{vis}}$. Here, \(IR_{\text{vis}}\) enriches the generation with fine-grained visual attributes (e.g., ``\(\langle\text{sks}\rangle\) has a clean fade haircut''), explicitly guiding the generation. \textbf{(2) For Textual Attribute Reasoning:} We concatenate the original inference prompt \(IP\) (e.g., ``a photo of \(\langle\text{sks}\rangle\) wearing his baseball
cap'') with the textual reasoning result \(IR_{\text{txt}}\) (e.g., ``\(\langle\text{sks}\rangle\) wears a red baseball hat'') to form the compound prompt: $CP_{\text{txt}} = IP \oplus IR_{\text{txt}}$. This allows the model to resolve ambiguities or missing details in \(IP\) by incorporating the explicitly reasoned attribute \(IR_{\text{txt}}\). This two-phase decomposition forms an explicit reasoning loop, directly optimizable via reinforcement learning. By designing a tailored reward function that jointly assesses $IR$ and the final generation (Section~\ref{subsec:sync_grpo}), we seamlessly enable the synergistic co-optimization of both comprehension and creation.

\subsection{Sync-GRPO} \label{subsec:sync_grpo}
To optimize the explicit reasoning trajectory defined in Section~\ref{subsec:reasoning}, we introduce \textbf{Sync-GRPO}, an objective designed to close the feedback loop between understanding and generation. Our approach builds upon the Group Relative Policy Optimization (GRPO) framework~\citep{deepseekai2025deepseekr1incentivizingreasoningcapability}, eliminating the value function dependency to enable joint optimization across discrete modalities. 
However, unlike pure text models, our multi-modal backbone (Show-o~\citep{xie2025showosingletransformerunify}) employs a discrete diffusion mechanism based on MaskGIT~\citep{chang2022maskgitmaskedgenerativeimage}. In this paradigm, image generation is formulated as a non-autoregressive mask-and-predict process. Starting from a fully masked state $I_T$, the model iteratively predicts and refines the entire set of image tokens over $T$ denoising steps to reach the final image $I_0$:
\begin{equation}
P_\theta(I_0 \mid I_T, CP) \propto \prod_{t=1}^{T} \prod_{k=1}^{N} \pi_\theta(I_{t-1, k} \mid I_t, CP)\label{eq:mask_prob}
\end{equation}
where $N$ is the number of image tokens, and $\pi_\theta(I_{t-1, k} \mid I_t, CP)$ represents the probability distribution over all tokens at step $t$. While only masked tokens are updated in each step during inference, the reinforcement learning objective considers the predictions across all positions as meaningful policy decisions that contribute to the final image quality. We formally model the complete synergistic trajectory of the two distinct phases as $o_i = (IR_i, \{I_{i,t}\}_{t=0}^T)$. To ensure stable optimization across varying task difficulties and prevent length-bias from degrading generative quality, we adopt the \textit{Dr.GRPO} formulation~\citep{liu2025understandingr1zeroliketrainingcritical}. The probability ratio $D_{i,j}(\theta)$, which serves as the fundamental learning signal, is defined to span the entire loop, capturing both understanding and generation information:
\begin{equation}
D_{i,j}(\theta) =
\begin{cases}
\displaystyle
\frac{\pi_\theta(IR_{i,j} \mid q, IR_{i,<j})}
     {\pi_{\theta_{\text{old}}}(IR_{i,j} \mid q, IR_{i,<j})}, j \in \text{Tokens in }IR_i \\[10pt]
\displaystyle
\frac{\pi_\theta(I_{i,t-1,k} \mid I_{i,t}, CP_i)}
     {\pi_{\theta_{\text{old}}}(I_{i,t-1,k} \mid I_{i,t}, CP_i)}, \space 
\begin{aligned}
&j \in \text{Image Tokens} \\
&\text{at step } t, \text{ index } k
\end{aligned}
\end{cases}
\label{eq:ratio}
\end{equation}
In this formulation, $q$ is determined by the randomly selected pathway (Visual Instruct or Textual Attribute), and $k$ denotes the spatial index within the image token grid at denoising step $t$. To balance the contribution of reasoning versus generation, we assign separate weights $\alpha_{\text{text}}$ and $\alpha_{\text{image}}$ to the two components. The final objective (detailed in Appendix~\ref{app:dr.grpo}) maximizes the clipped advantage estimated from this joint ratio while penalizing the KL-divergence from a reference policy for stability. This unified synergistic loop effectively aligns both modalities, ensuring that the intermediate reasoning outputs $IR$ are functionally optimal for precise downstream guidance, while the generative feedback reciprocally grounds and refines the model's core understanding capabilities.

\begin{figure*}[t]
\begin{center}
   \includegraphics[width=1.0\linewidth]
   {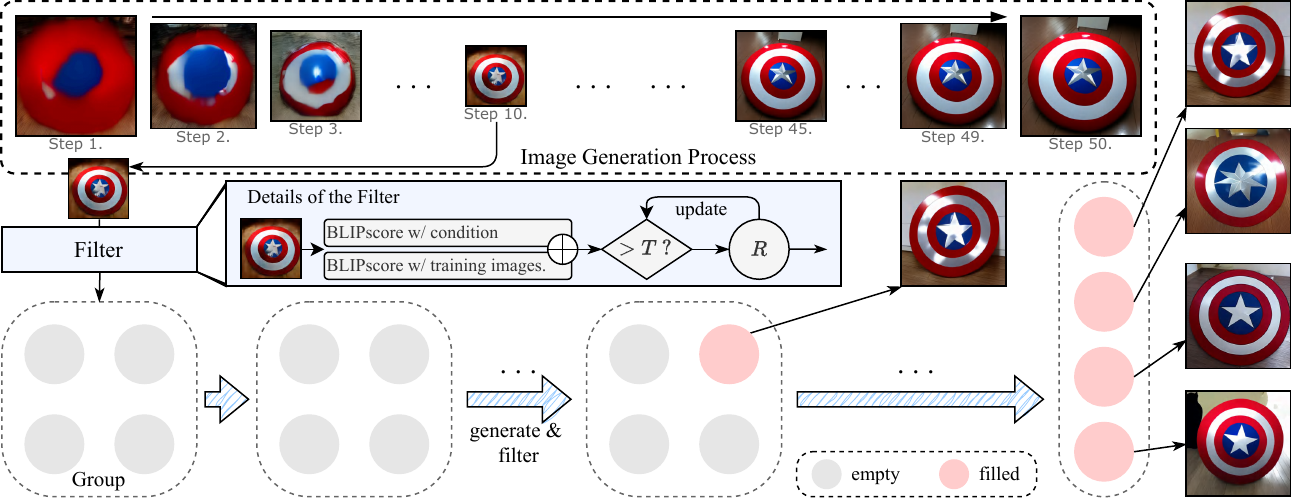}
   \vspace{-6pt}
\end{center}
\caption{\textbf{Illustration of Dynamic Group Scaling (DGS).} To improve efficiency, we employ a preliminary assessment at the 10th step of the Show-o generation process. The generation continues only if the intermediate score exceeds a dynamically updated threshold. This threshold is adjusted adaptively to maintain the probability of high-quality generation selection at a target level, effectively filtering out suboptimal trajectories early.}
\label{fig2}
\end{figure*}

\subsection{Dynamic Group Scaling}
\label{sec:training_strategy}

\textbf{Motivation.} The performance of Sync-GRPO is intrinsically tied to the group size $G$, but a larger $G$ demands prohibitively long training times. Inspired by selective sampling paradigms \citep{shrivastava2025samplethinklessgroup}, we propose Dynamic Group Scaling (DGS), which evaluates multiple trajectory candidates and retains only the most promising ones through early-stage screening. This strategy allows the model to concentrate its gradient computation on the most informative regions of the trajectory space, effectively achieving the variance-reduction benefits of an enlarged candidate pool at a fraction of the full-generation cost while maintaining high generation quality.

\textbf{Method.} As illustrated in Figure~\ref{fig2}, we exploit the observation that semantic structures often emerge in early denoising stages \citep{an2025mcllavamulticonceptpersonalizedvisionlanguage}. Specifically, at $t \approx 0.2 T_{total}$, we compute a surrogate reward $\tilde{R}$ (e.g., the BLIP Evaluation Reward shown at Section~\ref{4.2}). DGS iteratively samples trajectories until it obtains $G$ successful candidates that satisfy the threshold criterion $\tilde{R}(o_i) > T$. By only completing the generation process for these $G$ candidates, DGS ensures that the resulting gradients are derived from high-potential, semantically grounded samples.

\textbf{Theoretical Analysis.} The efficacy of DGS stems from its systematic reduction of gradient estimator variance. Let the terminal reward $R$ and surrogate $\tilde{R}$ for each trajectory be i.i.d. and follow a bivariate normal distribution with correlation $\rho = \mathrm{Corr}(R, \tilde{R}) > 0$.

\begin{theorem}[Variance Reduction]\label{thm:dgs_variance}
Under a linearized policy gradient regime (omitting clipping and KL terms for analysis), we have:
\begin{equation}
\mathrm{Tr}(\mathrm{Var}[\nabla_{\theta}\mathcal{J}]) \propto \mathrm{Var}[\hat{A}]
\end{equation}
where $\nabla_{\theta}\mathcal{J}$ is the standard Sync-GRPO gradient using $G$ random samples and:
\begin{equation}
    \hat{A}=R - \mathrm{E}[ R ]
\end{equation}
If we denote $N$ as the total number of the evaluated trajectories, the trace of the gradient covariance for DGS-enhanced Sync-GRPO satisfies:
\begin{equation}
\frac{\mathrm{Tr}(\mathrm{Var}[\nabla_{\theta}\mathcal{J}^{(DGS)}])}{\mathrm{Tr}(\mathrm{Var}[\nabla_{\theta}\mathcal{J}])} \leq 1 - \rho^2(1 - \frac{G}{N})
\end{equation}

\end{theorem}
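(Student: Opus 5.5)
We prove the statement in two stages: first the proportionality, then the variance ratio for the selected samples. For the first claim we write the linearized Sync-GRPO gradient (no clipping, no KL) as $\nabla_\theta\mathcal{J}=\frac{1}{G}\sum_{i=1}^G \hat A_i\, g_i$ with $g_i=\sum_j \alpha_{(j)}\nabla_\theta\log\pi_\theta(o_{i,j})$. Here $\alpha_{(j)}\in\{\alpha_{\text{text}},\alpha_{\text{image}}\}$, and the sum over $j$ runs over the joint understanding/generation tokens of \eqref{eq:ratio}, evaluated at $\theta=\theta_{\text{old}}$ so that every ratio equals $1$.

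Under the analysis regime we treat the score vectors $g_i$ as independent of $\hat A_i$, with zero mean and second moment $\mathbb{E}\|g_i\|^2=:\sigma_g^2$ that does not depend on the reward. This is the standard ``linearized'' simplification. Because the baseline is $\mathbb{E}[R]$, we have $\mathbb{E}[\hat A_i]=0$. The $G$ summands are i.i.d. with zero mean, so
\[
\mathrm{Tr}\,\mathrm{Var}[\nabla_\theta\mathcal{J}]=\frac{1}{G}\mathbb{E}[\hat A^2]\,\sigma_g^2=\frac{\sigma_g^2}{G}\mathrm{Var}[\hat A].
\]
This gives the proportionality with a constant $\sigma_g^2/G$ that is shared by the DGS estimator, since it also uses $G$ samples. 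The ratio of traces therefore reduces to $\mathrm{Var}[R\mid\text{selected}]/\mathrm{Var}[R]$.

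For the second stage we standardize and use the bivariate normal decomposition
\[
R=\mu+\sigma\big(\rho Z+\sqrt{1-\rho^2}\,W\big),
\]
with $Z=(\tilde R-\tilde\mu)/\tilde\sigma$ and $W\perp Z$ standard normal. DGS keeps trajectories with $\tilde R>T$. The threshold $T$ is adapted so that $G$ are kept out of $N$ evaluated, so the acceptance probability is $p=G/N$, and selection is a truncation $Z>z_p$ acting only on $Z$. Then
\[
\mathrm{Var}[R\mid Z>z_p]/\sigma^2=\rho^2\,\mathrm{Var}[Z\mid Z>z_p]+(1-\rho^2).
\]
The claim thus reduces to the one-dimensional inequality $\mathrm{Var}[Z\mid Z>z]\le p=G/N$, because then the ratio is at most $1-\rho^2+\rho^2 p=1-\rho^2(1-G/N)$. (If the analysis instead recenters the advantage at the conditional mean, the same computation applies, since the variance is unchanged.)

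The main obstacle is this truncated-normal bound. The standard formula is $\mathrm{Var}[Z\mid Z>z]=1+z\lambda(z)-\lambda(z)^2$, where $\lambda(z)=\phi(z)/(1-\Phi(z))$ is the inverse Mills ratio. We must show it is at most $1-\Phi(z)$ for every $z$. At $z\to-\infty$ both sides tend to $1$. As $z\to\infty$ the variance behaves like $1/z^2$, while $1-\Phi(z)$ is of order $\phi(z)/z$, so the bound appears to fail in the extreme tail. We would therefore first try to establish it on the practically relevant range of $p$, by monotonicity of the difference together with Mills-ratio inequalities (Birnbaum/Sampford bounds $\lambda(z)-z\in(0,1)$ and $\lambda'(z)\in(0,1)$).

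If the inequality does not hold globally, we would fall back on one of two repairs. The first is to reinterpret the selection as a random subsample in which each accepted sample retains the conditional structure. The second is a direct convexity argument: since $\mathrm{Var}[Z\mid Z>z]$ is decreasing in $z$, with value $1$ at $p=1$, we would compare it against the chord $p\mapsto p$ on $[0,1]$, checking concavity or convexity numerically and analytically. We would state explicitly whichever regime of $G/N$ the bound is proved for.
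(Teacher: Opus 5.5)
Your reduction is the same as the paper's. The linearized estimator, with score vectors independent of the advantage, gives $\mathrm{Tr}\,\mathrm{Var}\propto\mathrm{Var}[\hat A]$. The bivariate-normal decomposition then gives $\mathrm{Var}[R\mid\mathcal E]=\sigma_R^2\bigl(1-\rho^2+\rho^2 v(p)\bigr)$, with $v(p)=\mathrm{Var}[Z\mid Z>z_p]$ and $p=G/N$. Everything hinges on $v(p)\le p$, and that is exactly the step you leave unproved.

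Your doubt is justified, and the failure is not confined to the extreme tail. Since $v(p)=1+z\lambda(z)-\lambda(z)^2=1-\lambda'(z)$ at $z=z_p$, the needed inequality is equivalent to $\lambda'(z)\ge\Phi(z)$. Numerically this holds for $z\le z^\ast\approx 0.72$ and fails beyond, so $v(p)\le p$ holds only for $p\ge p^\ast\approx 0.235$. Concretely, $v(0.2)\approx 0.219>0.2$ and $v(0.1)\approx 0.169>0.1$. So at $G/N=0.1$ the true ratio $1-0.831\rho^2$ exceeds the claimed $1-0.9\rho^2$ for every $\rho>0$.

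The theorem as stated, with no condition on $G/N$, is therefore false. The paper's own proof breaks at the same step. It asserts $v(p)\le p$ for all $p\le 1/2$ on the grounds that $v$ increases in $p$ and equals $1$ at $p=1$, but those two facts only yield $v(p)\le 1$. The bound does hold, barely, at the paper's operating point $G/N\approx 0.25$, where $v(0.25)\approx 0.2416$.

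Neither of your repairs can close this. A subsample drawn independently of $\tilde R$ leaves the law of $R$ unchanged, so it gives ratio exactly $1$. Comparing $v$ with the chord $p\mapsto p$ is the same false inequality: $v(p)\sim z_p^{-2}\sim 1/(2\ln(1/p))$ lies above the chord near $0$, so no convexity argument works on all of $[0,1]$.

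What you can actually prove is twofold:
\begin{itemize}
\item the exact ratio $1-\rho^2\lambda'(z_{G/N})$, which is below $1$ for every $G/N$ by Sampford's bound $0<\lambda'<1$, so DGS always reduces variance;
\item the stated linear bound under the added hypothesis $G/N\ge p^\ast$, via a single-crossing argument for $\lambda'-\Phi$ or a direct check at the operating point.
\end{itemize}

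Also make the centering explicit. Your first stage computes the second moment $\mathbb{E}[\hat A^2]$, so the DGS advantage must be centered at $\mathbb{E}[R\mid\mathcal E]$. With the unconditional mean $\mu_R$ you get $\mathbb{E}[(R-\mu_R)^2\mid\mathcal E]=\sigma_R^2\bigl(1+\rho^2 z_p\lambda(z_p)\bigr)$, which exceeds $\sigma_R^2$ for $p<1/2$. The choice is therefore not innocuous, as your parenthetical suggests.
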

Theorem~\ref{thm:dgs_variance} implies that DGS suppresses gradient noise by a factor proportional to the filtration intensity $1-\frac{G}{N}$ and the surrogate fidelity $\rho$. This leads to an amplification of the Stochastic Gradient Signal-to-Noise Ratio (SNR), given by $\mathrm{SNR}(\theta) = \|\mathbb{E}[\mathbf{g}]\|^2 / \mathrm{Tr}(\mathrm{Var}[\mathbf{g}])$ for a gradient estimator $\mathbf{g}$.
\begin{corollary}[SNR Amplification]\label{cor:snr}
Based on Theorem~\ref{thm:dgs_variance}, the SNR of the DGS update relates to the baseline as:
\begin{equation}\frac{\mathrm{SNR}^{(DGS)}(\theta)}{\mathrm{SNR}(\theta)} > \frac{1}{1 - \rho^2(1 - \frac{G}{N})}.
\end{equation}
\end{corollary}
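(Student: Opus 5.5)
The statement to prove is Corollary~\ref{cor:snr}. We take Theorem~\ref{thm:dgs_variance} as given and combine it with a comparison of the mean gradients; we then have to explain why the inequality is strict.

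\textbf{Step 1: reduce to a ratio of two factors.} Write $\mathbf{g}=\nabla_\theta\mathcal{J}$ and $\mathbf{g}^{(DGS)}=\nabla_\theta\mathcal{J}^{(DGS)}$. By definition,
\begin{equation}
\frac{\mathrm{SNR}^{(DGS)}(\theta)}{\mathrm{SNR}(\theta)}=\frac{\|\mathbb{E}[\mathbf{g}^{(DGS)}]\|^2}{\|\mathbb{E}[\mathbf{g}]\|^2}\cdot\frac{\mathrm{Tr}(\mathrm{Var}[\mathbf{g}])}{\mathrm{Tr}(\mathrm{Var}[\mathbf{g}^{(DGS)}])}.
\end{equation}
Theorem~\ref{thm:dgs_variance} bounds the second factor from below by $1/(1-\rho^2(1-G/N))$. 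The denominator is positive because $\rho^2\le 1$ and $G\le N$. It therefore suffices to show that the first factor is strictly greater than $1$.

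\textbf{Step 2: compare the mean gradients.} We stay in the linearized regime of the theorem, so $\mathbf{g}=\frac{1}{G}\sum_i \hat{A}_i\,\nabla_\theta\log\pi_\theta(o_i)$.
\begin{itemize}
\item In the baseline, $o_i$ is drawn from $\pi_\theta$. Its expectation is the true policy gradient $\mathbb{E}[\hat A\,\nabla\log\pi]$.
\item Under DGS, trajectories are drawn from the truncated law conditioned on $\tilde R>T$. By the bivariate normal assumption, $\mathbb{E}[R\mid \tilde R>T]=\mathbb{E}[R]+\rho\,\sigma_R\,\lambda$. Here $\lambda=\phi(\tau)/(1-\Phi(\tau))>0$ is the inverse Mills ratio at the standardized threshold $\tau$.
\end{itemize}
Since $\rho>0$, conditioning shifts the mass of the score function toward high-reward regions. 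The mean update direction thus gains an extra positive component along the improvement direction $\mathbb{E}[\hat A\,\nabla\log\pi]$. Projecting $\mathbb{E}[\mathbf{g}^{(DGS)}]$ onto $\mathbb{E}[\mathbf{g}]$ and using Cauchy--Schwarz then gives $\|\mathbb{E}[\mathbf{g}^{(DGS)}]\|\ge\|\mathbb{E}[\mathbf{g}]\|$. The inequality is strict whenever the selection is nontrivial, i.e.\ $G<N$ and $\rho>0$.

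\textbf{Step 3: conclude, and where the difficulty lies.} Multiplying the factor from Step 2 (strictly above $1$) by the bound from Theorem~\ref{thm:dgs_variance} yields the strict inequality in the corollary. Step 2 is the main obstacle. Selection bias could in principle rotate the mean gradient rather than lengthen it, and the baseline in $\hat A$ is recomputed within the filtered group.

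\begin{itemize}
\item My first attempt is to work entirely in the scalar surrogate the theorem already uses, where the gradient is proportional to the advantage. There, the first factor reduces to a comparison of the squared mean of the signal under truncation versus without truncation, and the Mills-ratio shift is strictly positive.
\item If that identification is too coarse, I would fall back to a weaker claim. The mean gradient is unchanged in norm to first order, which gives $\ge$. Strictness then comes from the inequality in Theorem~\ref{thm:dgs_variance} being strict whenever $\rho<1$ or $G<N$, since the truncated-normal conditional variance $1-\rho^2\lambda(\lambda-\tau)$ is strictly below the linear bound used there.
\end{itemize}
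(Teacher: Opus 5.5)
\textbf{There is a genuine gap.} Your Steps 1 and 3 match the paper's proof exactly. Both split the SNR ratio into a signal factor and a noise factor, take the noise factor from Theorem~\ref{thm:dgs_variance}, and argue that the signal factor exceeds $1$ via the inverse Mills ratio shift $\mathbb{E}[R\mid\mathcal{E}]=\mu_R+\rho\sigma_R\lambda$. The gap is in Step 2, and following the paper does not close it, because the paper only asserts this step in one line (``the expected gradient aligns with the expected reward of the selected samples'').

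The Mills-ratio identity moves the mean of $R$, but the estimator uses a centered advantage, and under DGS the centering is recomputed on the filtered group. So the shift $\rho\sigma_R\lambda$ is subtracted out exactly; the concern you raise at the end is fatal to this line of argument, not a technicality. In the linearized regime the mean gradient is, up to the common factor $(G-1)/G$, equal to $\mathrm{Cov}(R,\nabla_\theta\log\pi_\theta)$ for the baseline and $\mathrm{Cov}(R,\nabla_\theta\log\pi_\theta\mid\mathcal{E})$ for DGS. Neither depends on $\mathbb{E}[R]$. Worse, the assumptions behind the Theorem's noise formula ($\mathbf{u}$ independent of $\hat A$, and $\mathbb{E}[\hat A]=0$) give $\mathbb{E}[\mathbf{g}]=\mathbb{E}[\mathbf{u}]\,\mathbb{E}[\hat A]=0$ for both estimators, so your first factor is $0/0$.

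If you drop independence, truncation tends to shrink the signal rather than enlarge it. Suppose a score component $s$ is jointly Gaussian with $(R,\tilde R)$, and write $R-\mu_R=b_R(\tilde R-\mu_{\tilde R})+e_R$ and $s=b_s(\tilde R-\mu_{\tilde R})+e_s$ with residuals independent of $\tilde R$. Then $\mathrm{Cov}(R,s\mid\mathcal{E})=b_Rb_s\,\mathrm{Var}(\tilde R\mid\mathcal{E})+\mathrm{Cov}(e_R,e_s)$. When $\mathrm{Cov}(e_R,e_s)=0$, this is the unconditional value multiplied by $\mathrm{Var}(\tilde R\mid\mathcal{E})/\sigma_{\tilde R}^2<1$. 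So the ``extra positive component along the improvement direction'' and the Cauchy--Schwarz projection have nothing to rest on. Even in the paper's scalar surrogate, $(\mu_R+\delta)^2>\mu_R^2$ with $\delta=\rho\sigma_R\lambda$ needs an unstated sign condition, $\mu_R>-\delta/2$ (for instance, nonnegative rewards).

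Your fallback does not rescue the strict inequality either. ``Unchanged in norm to first order'' is the same unproved claim, and the Gaussian computation above contradicts it. Strictness of the Theorem's bound is also not automatic. The only inequality in that proof is $\mathrm{Var}(Z\mid Z>z_p)\le p$ with $p=G/N$. For strictness you need $\rho>0$, so that this term is present at all, plus strictness of that truncated-variance bound; $\rho<1$ plays no role. Moreover, the bound itself is false for small $p$. Using $\mathrm{Var}(Z\mid Z>z)=1+z\lambda-\lambda^2$, at $p=0.1$ the truncated variance is about $0.169$, and as $p\to 0$ it decays only like $1/(2\ln(1/p))$. The bound holds, strictly, only for $p$ above about $0.235$. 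That range happens to contain the operating point $G/N\approx 0.25$, but not the general setting.

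A valid argument would have to fix a model in which the mean gradient is nonzero and compute how conditioning on $\mathcal{E}$ acts on that mean gradient, not on $\mathbb{E}[R]$. In the natural models above, the claimed amplification does not follow.
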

Higher SNR translates to more reliable update directions, accelerating traversal across the optimization manifold. Relative proofs are provided in Appendix~\ref{app:proofs}.

\textbf{Adaptive Threshold Control.} 
To maintain a robust and stable selection ratio near the Target Pass Rate (TPR) as $\pi_\theta$ evolves during training, we implement a trend-aware closed-loop controller. Let $PR_k$ be the pass rate at iteration $k$, and $\Delta_k = PR_k - \text{TPR}$. The threshold $T$ evolves via the following adaptive dynamics:

\begin{equation}
\label{eq:threshold}
\begin{cases}
T_{k+1} &= T_k^{1 + (\mu - \epsilon_0 \cdot \mathbb{I}[\tau_k \Delta_k < 0]) \Delta_k} \\
\tau_{k+1} &= \eta \tau_k + (1-\eta)(T_{k+1} - T_k)
\end{cases}
\end{equation}
where $\mu$ controls sensitivity, $\eta$ is momentum decay, and the indicator $\mathbb{I}[\tau_k \Delta_k < 0]$ reduces gain upon trend conflict. This adaptive control mechanism ensures that T robustly tracks the $(1-\text{TPR})$-quantile of $\tilde{R}$, effectively filtering out stochastic noise while preserving high information density in every gradient step.

\input{tables/main_result}
\section{Experiment}
In this section, we comprehensively evaluate the efficacy of our proposed Sync-R1 framework across a diverse range of complex, personalized understanding and generation tasks. Our experiments are carefully designed to investigate: (1) whether the synergistic loop improves fine-grained multi-modal alignment; (2) the computational efficiency gains provided by the DGS mechanism; and (3) the framework's robustness across different initialization recipes.
\subsection{Experiment Setup}
\textbf{Implementation Details.}
We utilize Show-o (1.3B, 512$\times$512) \citep{xie2025showosingletransformerunify} as the multi-modal backbone. We adopt the multi-stage supervised fine-tuning protocol established in~\citet{an2025unictokensboostingpersonalizedunderstanding} to initialize the personalized tokens. In the RL stage, we set the group size $G=9$ and maintain a balanced 1:1 ratio between Visual Instruct Reasoning and Text Attribute Reasoning to promote stable policy optimization. All models are trained on $8\times$ NVIDIA H100 GPUs. Hyperparameters and SFT training protocols are provided in Appendix~\ref{app:experiments} for further technical details.

\textbf{Baseline Comparisons.} We benchmark Sync-R1 against a comprehensive suite of models across diverse paradigms to thoroughly assess its overall capabilities: (1) unified architectures, including UniCTokens~\citep{an2025unictokensboostingpersonalizedunderstanding} (adopting its official training protocol) and Yo'Chameleon~\citep{nguyen2025yochameleonpersonalizedvisionlanguage} (retrained at 7B scale with 1k images per concept); (2) specialized understanding-only models and generation-only models; and (3) GPT-4o, serving as an approximate upper bound. Further implementation details are provided in Appendix~\ref{app:experiments}.

\textbf{Dataset.}
To evaluate high-order inferential logic and dense semantic synthesis, we augment UnifyBench into UnifyBench++. For understanding, we incorporate Reasoning and Dense Reasoning tasks to test concept disambiguation. For generation, we introduce Reasoning Generation alongside Dense Generation and Dense Personalized Generation to assess fine-grained alignment. This creates a rigorous stress-test for synergistic multi-modal reasoning (see Appendix~\ref{app:dataset}).

\textbf{Multi-modal Evaluation Suite.} For reasoning accuracy, we employ BLEU and GPT-4o-based semantic scoring to ensure precise textual understanding. For generative fidelity, we employ CLIP-T and GPT-4o to assess the visual alignment with dense prompts and inferred personalized concepts throughout the reasoning trajectory.

\subsection{Synergistic Reward Ensemble}\label{4.2}
To close the feedback loop between discrete reasoning and generation, we define a unified reward function:
\begin{equation}
R_{total} = w_1 R_{\text{TIER}} + w_2 R_{\text{BER}} + w_3 R_{\text{DER}} + w_4 R_{\text{FER}}\label{eq:reward_ensemble}
\end{equation}
The ensemble incorporates: (1) TIER (Text Inference Evaluation) via ERNIE 3.0~\citep{sun2021ernie30largescaleknowledge} to measure logical consistency; (2) BER (BLIP Evaluation~\citep{li2023blip2bootstrappinglanguageimagepretraining}) for cross-modal semantic alignment; (3) DER (DINOv2 Evaluation~\citep{oquab2024dinov2learningrobustvisual}) for identity preservation of personalized concepts; and (4) FER (Facenet Evaluation) for high-fidelity facial synthesis. Scaling coefficients $w_i$ are calibrated via sensitivity analysis (see Appendix~\ref{app:reward_calibration}).

\input{tables/ablation_1}
\subsection{Main Results}
\label{sec:main_results}

\textbf{Performance on UnifyBench.} 
We first evaluate the foundational capabilities of Sync-R1 on standard personalized understanding tasks, including Recognition (Rec.), Visual Question Answering (VQA), and Question Answering (QA)~\citep{alaluf2024myvlmpersonalizingvlmsuserspecific}. As presented in Table~\ref{tab:main}, despite possessing only 1.3B parameters, Sync-R1 delivers exceptional performance, achieving an average improvement of \textbf{12.2\%} over previous state-of-the-art (SOTA) methods across all understanding metrics. In terms of generation, we utilize the Pure Gen. metric to assess identity-preserving generation capabilities. Sync-R1 consistently outperforms other unified models, securing the top rank. Collectively, these results corroborate that our framework offers a highly resource-efficient solution for unifying personalized understanding and generation without compromising performance on established benchmarks.

\textbf{Performance on UnifyBench++.} 
The primary contribution of our evaluation lies in UnifyBench++, which is designed to stress-test the model's capacity for reasoning-intensive and information-dense scenarios. 
As shown in Table~\ref{tab:main}, Sync-R1 achieves SOTA results across all newly introduced metrics, validating the effectiveness of our Sync-GRPO strategy. \textbf{(1) Understanding Reasoning:} We observe substantial gains in the Dense Rea. metric compared to baselines. This strongly indicates that our model has transcended simple pattern matching and acquired a deeper, more granular understanding of personalized concepts embedded within complex contexts. \textbf{(2) Generation Reasoning:} In generation tasks requiring reasoning and high information density, our method demonstrates a significant leap over the baseline UniCTokens. Specifically, we achieve relative improvements of \textbf{14.1\%} in Dense Gen., \textbf{10.0\%} in Rea. Gen., and \textbf{8.8\%} in Dense Rea. Gen. \textbf{(3) Qualitative Analysis:} As visualized in Figure~\ref{showcase}, we evaluate fidelity across a complexity spectrum, ranging from Pure Generation to challenging Dense Reasoning Generation. Crucially, Sync-R1 demonstrates superior robustness in both explicit structural arrangement and implicit attribute inference. In Dense Generation, it maintains precise spatial relations under heavy semantic load (e.g., correctly positioning $\langle em \rangle$ \textit{beside} the lamppost), whereas baselines often suffer from severe spatial dislocation. Furthermore, in complex Reasoning scenarios, Sync-R1 successfully retrieves and accurately visualizes implicit attributes (e.g., the painting of flowers) where previous methods succumb to catastrophic neglect of crucial details. This confirms that our framework enhances both the spatial coherence of complex scenes and the semantic depth of personalized understanding.

\subsection{Ablation Study}
\label{4.3}

In this section, we dissect the contribution of each component within Sync-R1, focusing on the necessity of the explicit synergistic loop and the efficiency gains from Dynamic Group Scaling (DGS).

\textbf{Efficacy of the Explicit Synergistic Loop.}
To validate the hypothesis that personalized understanding and generation mutually enhance one another, we conduct a component-wise ablation by selectively disabling parts of the reasoning trajectory (Table~\ref{tab:abl1}). The removal of Visual Instruct Reasoning leads to a precipitous drop in MMU metrics (e.g., Rec.) and degrades generative fidelity, confirming that explicit extraction of visual attributes is essential for semantic guidance. Similarly, excluding Textual Attribute Reasoning causes a severe regression in logic-intensive metrics (e.g., Rea. and Rea. Gen.), establishing this module as the primary driver for resolving semantic ambiguities. Most critically, when the entire understanding loop is removed—reducing the method to standard RL-tuned generation—we observe a systemic collapse across all personalization metrics. While standard aesthetic scores (Pure Gen.) remain stable, the model loses its ability to align with user-specific contexts (e.g., Rea. Gen.). This decoupling confirms our core premise: generation quality alone does not equate to personalization; explicit understanding is the prerequisite for faithful user alignment.

\textbf{Efficiency and Dynamics of DGS.}
We analyze the training dynamics in Figure~\ref{fig:dgs_analysis}. As illustrated in Figure~\ref{fig:dgs_analysis}(a), DGS achieves a $1.9\times$ wall-clock acceleration, confirming that the benefits of gradient variance reduction significantly outweigh the selection overhead. The underlying mechanism is visualized in Figure~\ref{fig:dgs_analysis}(b), where the threshold (Purple) automatically trends upward in correlation with the model's increasing competence. This acts as a homeostatic regulator, anchoring the pass rate (Orange) to the target ratio. By establishing this implicit self-pacing curriculum, DGS ensures robust stability and continuous learning from high-value trajectorie without manual hyperparameter tuning.

\textbf{Robustness to Initialization Strategies.}
Finally, we investigate whether Sync-R1 relies on complex cold-start procedures (e.g., ~\citet{an2025unictokensboostingpersonalizedunderstanding}). Results in Table~\ref{tab:abl2} indicate that Sync-R1 achieves consistent SOTA performance regardless of initialization. Notably, applying Sync-GRPO to a simple ``Joint Training'' baseline yields a 17.9\% improvement in Dense Rea. and an average increase of 13.8\% in VQA tasks, surpassing previous complex SOTA. The method also achieves a 7.5\% average gain in generation metrics, with CLIP-T scores even exceeding those of UniCTokens. This proves that Sync-R1 serves as a robust, universal post-training paradigm that democratizes high-performance personalization without demanding elaborate pre-training recipes.

\section{Conclusion}
We present \textbf{Sync-R1}, a unified framework that orchestrates the synergistic co-optimization of personalized understanding and generation. By establishing an explicit reasoning loop via \textbf{Sync-GRPO}, we bridge the gap between comprehension and creation, enabling these tasks to mutually reinforce one another within a closed-loop reinforcement learning process. This approach empowers the model to transcend simple pattern matching, achieving deeper reasoning capabilities essential for interpreting dense user contexts and generating high-fidelity personalized content. Empirical results on \textbf{UnifyBench++} demonstrate that Sync-R1 significantly outperforms existing baselines, particularly in reasoning-intensive and information-dense scenarios. Furthermore, our \textbf{Dynamic Group Scaling (DGS)} strategy addresses the computational bottlenecks of multimodal RL, accelerating convergence by filtering high-potential trajectories. In summary, this work establishes a new paradigm for leveraging reinforcement learning to align multimodal understanding and generation, paving the way for more robust, efficient, and reasoning-driven personalization systems.
\bibliographystyle{unsrtnat}
\bibliography{example_paper}  
\newpage
\appendix
\onecolumn
\section{Other Related Work}

\paragraph{Reinforcement Learning in Large Models.} Within the LLM domain, recent studies~\citep{chu2025gpgsimplestrongreinforcement} have made significant strides in addressing critical challenges such as long-chain reasoning, output coherence, and training efficiency. Following the introduction of the GRPO strategy and the demonstration of rule-based rewards in DeepSeek-R1~\citep{deepseekai2025deepseekr1incentivizingreasoningcapability}, there has been a surge of research applying these principles to Multimodal LLMs (MLLMs)~\citep{huang2025visionr1incentivizingreasoningcapability,pan2025medvlmr1incentivizingmedicalreasoning,zhou2025r1zerosahamomentvisual}. This expansion spans diverse applications, including semantic segmentation~\citep{liu2025segzeroreasoningchainguidedsegmentation}, object recognition~\citep{liu2025visualrftvisualreinforcementfinetuning}, video analysis~\citep{zhang2025tinyllavavideor1smallerlmmsvideo,sun2025videosalmonno1reasoningenhancedaudiovisuallarge}, code generation~\citep{austin2021programsynthesislargelanguage,jain2024livecodebenchholisticcontaminationfree}, and mathematical problem-solving~\citep{hendrycks2021measuringmathematicalproblemsolving,zhang2024mathversedoesmultimodalllm,zhang2024mavismathematicalvisualinstruction}. Ample evidence suggests that RL not only enhances in-domain reasoning capabilities but also yields superior performance compared to Supervised Fine-Tuning (SFT) in out-of-distribution (OOD) scenarios.

\paragraph{Personalized Models.} The core objective of model personalization is to precisely integrate concept-specific information into model outputs via diverse techniques~\citep{zheng2026pearlpersonalizedstreamingvideo,feng2026m2amultimodalmemoryagent}. In the generative domain, recent approaches focus on recontextualization conditioned on text. For instance, DreamBooth~\citep{ruiz2023dreamboothfinetuningtexttoimage} ensures subject authenticity in generated results, while Textual Inversion~\citep{gal2022imageworthwordpersonalizing} optimizes special tokens using soft-prompt tuning. In the realm of LLMs, personalization has been achieved through dual-tower structures that endow models with user-aware capabilities~\citep{pi2024personalizedvisualinstructiontuning}. For MLLMs, recent works~\citep{an2025conceptastreecontrollablesyntheticdata} enhance output relevance by organically combining user data with visual content via fine-tuning or Retrieval-Augmented Generation (RAG). Within Unified MLLMs, Chameleon~\citep{chameleonteam2025chameleonmixedmodalearlyfusionfoundation} employs ``text embedding optimization + Transformer fine-tuning'' for information injection, while Yo'Chameleon~\citep{nguyen2025yochameleonpersonalizedvisionlanguage} utilizes soft prompts. UniCTokens~\citep{an2025mcllavamulticonceptpersonalizedvisionlanguage} was the pioneer in focusing on the mutual promotion between understanding and generation, significantly reducing sample requirements and expanding the scope of personalization. However, despite these advancements, existing unified methods often suffer from cumbersome training protocols and suboptimal performance in complex semantic generation tasks, highlighting an urgent need for more efficient and robust solutions~\citep{yang2025smalllargecollaborationtrainingefficientconcept}.

\paragraph{Unifying Understanding and Generation.} Ideally, a single model should seamlessly handle both understanding and generation tasks. To this end, numerous studies have explored unified architectures. For example,~\citet{ge2023plantingseedvisionlarge,dong2024dreamllmsynergisticmultimodalcomprehension,ge2025seedxmultimodalmodelsunified} aggregate language-conditioned information to drive generation, while Janus~\citep{wu2024janusdecouplingvisualencoding} models different modalities using distinct tokenizers. Show-o~\citep{xie2025showosingletransformerunify} and Transfusion~\citep{zhou2024transfusionpredicttokendiffuse} combine autoregressive and diffusion methods to process text and images, whereas Emu3~\citep{wang2024emu3nexttokenpredictionneed} and Selftok~\citep{wang2025selftokdiscretevisualtokens} unify multimodal data into discrete tokens for joint training. In personalization scenarios, the cross-modal potential of unified models inspires the pursuit of complementary advantages. While early attempts serially combined understanding and generation models~\citep{luo2024llmdatasetanalystsubpopulation,dunlap2023diversifyvisiondatasetsautomatic}, they faced challenges in end-to-end optimization. Subsequent efforts in joint training with shared representations~\citep{fang2023instructsequnifyingvisiontasks,tong2024metamorphmultimodalunderstandinggeneration,fan2024fluidscalingautoregressivetexttoimage} achieved co-optimization but often lacked deep synergy. In this work, we leverage reinforcement learning to explicitly amplify the mutual promotion between generation and understanding, fundamentally enhancing the model's core reasoning ability regarding abstract conceptual information and significantly improving its overall performance in complex, reasoning-intensive generation tasks.

\section{Details of Dataset}
\label{app:dataset}
Building upon the foundation of UnifyBench~\citep{an2025mcllavamulticonceptpersonalizedvisionlanguage}, we introduce \textbf{UnifyBench++}, an augmented benchmark designed to rigorously evaluate advanced reasoning and generation capabilities. 
As illustrated in Figure~\ref{fig:dataset}, we enriched the dataset structure by associating each concept with distinct pieces of supplementary context (denoted as \texttt{extra\_info}), each paired with a specific reasoning prompt. The italicized segments in the figure represent the training data---comprising the \texttt{extra\_info} and reasoning prompts exposed to Sync-R1 during the optimization phase. Conversely, the non-italicized segments are strictly held out for evaluation under the Reasoning (\texttt{Rea.}) and Reasoning Generation (\texttt{Rea.} \texttt{Gen.}) protocols. Furthermore, to assess robustness in high-complexity settings, we constructed three specialized subsets. Dense Generation (\texttt{Dense Gen.}) targets the generative domain, challenging the model to maintain fidelity under high information load. Dense Reasoning (\texttt{Dense Rea.}) focuses on the understanding domain, evaluating fine-grained comprehension capabilities in complex scenarios. Crucially, Dense Reasoning Generation (\texttt{Dense Rea.} \texttt{Gen.}) bridges all these modalities by assessing the model's ability to synthesize visual content contingent upon attributes inferred from the understanding process within a dense-prompt context.
\begin{figure*}[t!]
\begin{center}
       \includegraphics[width=1.0\textwidth]
       {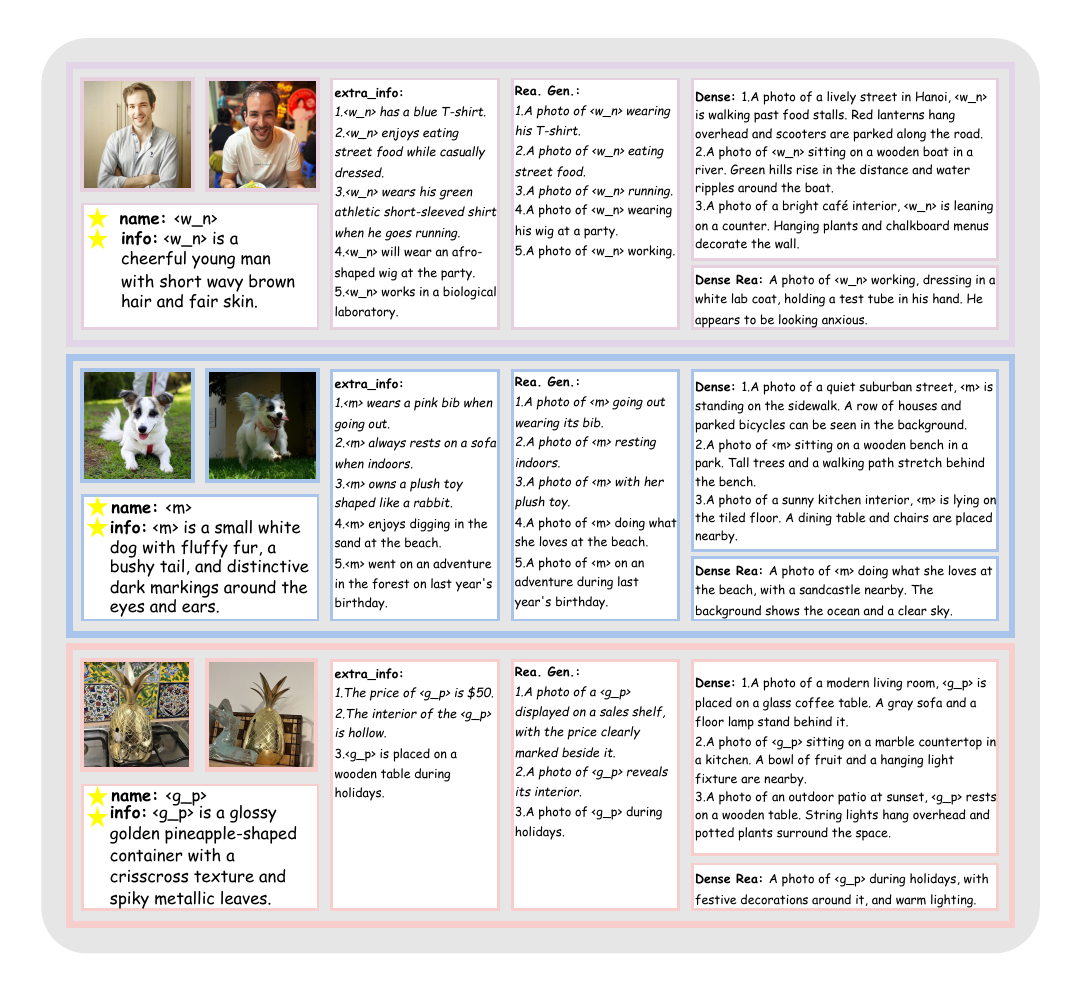}
    \end{center}
    \caption{\textbf{Snapshot of UnifyBench++.} We present illustrative examples of concept entries within our constructed dataset. The \textit{italicized segments} within the \texttt{extra\_info} and \texttt{Rea.} \texttt{Gen.} columns denote the specific data subsets utilized for training Sync-R1, while the non-italicized text is reserved for evaluation.}
    \label{fig:dataset}
\end{figure*}

\section{Theoretical Formulation of Sync-GRPO}
\label{app:dr.grpo}

\paragraph{Foundational Strategy.}
We build our optimization framework upon the \textbf{Dr.GRPO} strategy~\citep{liu2025understandingr1zeroliketrainingcritical}, which mitigates critical optimization biases inherent in standard GRPO. Dr.GRPO fundamentally diverges from the standard paradigm through two synergistic modifications: First, it discards the variance normalization term ($\sigma$). Instead of $\hat{A}_i = (R_i - \mu)/\sigma$, it computes advantage via strict group centering: $\hat{A}_i = R_i - \frac{1}{G} \sum_{k=1}^G R_k$. This preserves the absolute magnitude of reward differences, ensuring gradients accurately reflect semantic difficulty and preventing noise amplification in low-variance groups. Besides, it eliminates the token-level length normalization ($1/|o_i|$) from the objective. By treating the generation trajectory as a holistic unit (via summation $\sum_{j=1}^{|o_i|}$ rather than averaging), Dr.GRPO effectively curtails the model's tendency to exploit reward structures through verbosity, ensuring concise alignment with prompt constraints. Dr.GRPO effectively curtails the model's inherent tendency to exploit underlying reward structures through excessive verbosity, ultimately ensuring concise alignment with complex user prompt constraints.

\paragraph{The Optimization Objective of Sync-GRPO.}
To adapt this foundation to our unified multi-modal setting---where the model must simultaneously perform intermediate reasoning (Understanding) and visual synthesis (Generation)---we propose \textbf{Sync-GRPO}. This formulation introduces a dynamic weighting mechanism to balance the distinct gradient contributions of the understanding and generation phases.
The final optimization objective is formulated as:
\begin{equation}
\begin{split}
\mathcal{J}_{\text{Sync-GRPO}}(\theta) &= \mathbb{E}_{(q,a) \sim \mathcal{D}, \{ o_i \}_{i=1}^G \sim \pi_{\theta_{\text{old}}}(\cdot \mid q)} \\
&\left[ \frac{1}{G} \sum_{i=1}^G \sum_{j=1}^{|o_i|} \alpha_j \left( \min\left( D_{i,j}(\theta) \hat{A}_i, \text{clip}\left( D_{i,j}(\theta), 1 - \varepsilon, 1 + \varepsilon \right) \hat{A}_i \right) - \beta \mathbb{D}_{\text{KL}}(\pi_\theta \parallel \pi_{\text{ref}}) \right) \right]
\end{split}
\label{eq:sync_grpo}
\end{equation}
Here, the task-specific weighting coefficient $\alpha_j$ is defined piecewise to modulate the learning focus across the sequence:
\begin{equation}
    \alpha_j =
    \begin{cases}
        \alpha_{\text{text}}, & j \in \text{Tokens in }IR_i \\
        \alpha_{\text{image}}, & j \in \text{Image Tokens}
    \end{cases}
    \label{eq:alpha_weight}
\end{equation}
where $IR_i$ denotes the length of the intermediate reasoning segment (Understanding Task), and the subsequent tokens correspond to the generation phase. By tuning $\alpha_{\text{text}}$ and $\alpha_{\text{image}}$, Sync-GRPO ensures robust joint optimization, preventing one modality from dominating the learning process while maintaining the rigorous constraints of Dr.GRPO.
\section{Theoretical Analysis and Proofs}
\label{app:proofs}

In this section, we provide the detailed mathematical derivations supporting the effectiveness of Sync-GRPO and Dynamic Group Scaling (DGS). We proceed in a bottom-up manner: first deriving the gradient variance, then proving the variance reduction and SNR amplification properties of DGS, and finally establishing the control stability.

\subsection{Gradient Derivation and Variance Decomposition}
\label{subsec:grad_derivation}

To justify the focus of DGS on minimizing reward variance, we first explicitly derive the analytical form of the gradient estimator $\hat{g}$ under the clipping mechanism and then decompose its variance.

\paragraph{Step 1: Derivation of the Gradient Estimator $\hat{g}$.}
Recall the sample-based objective function for Sync-GRPO:
\begin{equation}
\hat{\mathcal{J}}(\theta) = \frac{1}{BG} \sum_{b,i,j} \alpha_{j} \underbrace{\min\left( D_{i,j} \hat{A}_i, \text{clip}(D_{i,j}, 1-\varepsilon, 1+\varepsilon) \hat{A}_i \right)}_{\mathcal{L}^{\text{CLIP}}_{i,j}(\theta)} - \beta \mathbb{D}_{\text{KL}}
\end{equation}
We compute the gradient $\nabla_\theta \mathcal{L}^{\text{CLIP}}_{i,j}(\theta)$ by analyzing the behavior of the $\min$ operator. The clipping function restricts $D_{i,j}$ to the interval $[1-\varepsilon, 1+\varepsilon]$. Differentiating with respect to $\theta$:

\begin{itemize}[noitemsep, topsep=0pt,leftmargin=*]
    \item \textbf{Case 1: Active Region.} The probability ratio is within the trust region, or the update moves $D_{i,j}$ towards the region. In this case, $\min(\cdot) = D_{i,j} \hat{A}_i$.
    Using the log-derivative trick $\nabla_\theta D_{i,j} = D_{i,j} \nabla_\theta \log \pi_\theta(o_{i,j})$, the gradient is:
    \[ \nabla_\theta (D_{i,j} \hat{A}_i) = \hat{A}_i \cdot D_{i,j} \nabla_\theta \log \pi_\theta(o_{i,j}) \]
    
    \item \textbf{Case 2: Clipped Region.} The term is clipped to $(1 \pm \varepsilon)\hat{A}_i$.
    Since $(1 \pm \varepsilon)\hat{A}_i$ is a constant with respect to the current policy parameters $\theta$ (locally), its gradient is zero:
    \[ \nabla_\theta ((1 \pm \varepsilon)\hat{A}_i) = 0 \]
\end{itemize}

Let $\mathbb{I}_{i,j}^{\text{active}}$ be the indicator function for the active region. The total policy gradient estimator $G_{\text{PG}}$ is:
\begin{equation}
G_{\text{PG}} = \frac{1}{BG} \sum_{b,i,j} \mathbb{I}_{i,j}^{\text{active}} \cdot \alpha_{j} D_{i,j} \hat{A}_i \nabla_\theta \log \pi_\theta(o_{i,j})
\end{equation}

\paragraph{Step 2: Variance Decomposition.}
Based on Step 1, the total gradient variance is:
\begin{equation}
    \mathrm{Var}(\hat{g}) = \mathrm{Var}(G_{\text{PG}} - \beta \nabla \mathbb{D}_{\text{KL}})
\end{equation}
Since the KL-divergence term depends only on the policy distribution (which is deterministic given $\theta$) and does not involve the stochastic sampling of rewards, its variance is negligible compared to the Monte Carlo estimator $G_{\text{PG}}$. Thus, we focus on analyzing $\mathrm{Var}(G_{\text{PG}})$.

\paragraph{Step 3: Variance Analysis of the Policy Gradient Term.}
Consistent with the \textit{linearized policy gradient} regime assumed in Theorem~\ref{thm:dgs_variance} of the main text, we omit the clipping mechanism (i.e., setting $\mathbb{I}_{i,j}^{\text{active}} = 1$) for this variance analysis.
Let the unclipped gradient contribution of a single sample be:
\begin{equation}
\mathbf{x}_{b,i,j} = \underbrace{\left( \alpha_{j} D_{i,j} \nabla_\theta \log \pi_\theta(o_{i,j}) \right)}_{\mathbf{u}_{b,i,j}} \cdot \hat{A}_i
\end{equation}
We assume that the fluctuations in $\mathbf{u}_{b,i,j}$ are statistically independent of the fluctuations in the advantage estimate $\hat{A}_i$.

Applying the variance of a product for independent variables, and noting that the advantage function is typically centered (i.e., $\mathbb{E}[\hat{A}_i] \approx 0$ due to the baseline), the variance simplifies as follows:
\begin{equation}
\mathrm{Var}(\mathbf{x}_{b,i,j}) \approx \mathbb{E}[\|\mathbf{u}_{b,i,j}\|^2] \cdot \mathrm{Var}(\hat{A}_i)
\end{equation}
This indicates that the variance of the gradient update is the product of the expected squared norm of $\mathbf{u}_{b,i,j}$ and the variance of the advantage.
Consequently, the trace of the variance for the aggregated estimator is:
\begin{equation}
\mathrm{Tr}(\mathrm{Var}[G_{\text{PG}}]) = \frac{1}{(BG)^2} \sum_{b,i,j} \mathbb{E}[\|\mathbf{u}_{b,i,j}\|^2] \cdot \mathrm{Var}(\hat{A}_i)
\end{equation}

\paragraph{Conclusion.}
Since $\hat{A}_i = R_i - \text{mean}\left( \{ R_i \}_{i=1}^G \right)$ and the baseline $\text{mean}\left( \{ R_i \}_{i=1}^G \right)$ is deterministic within the group, we have $\mathrm{Var}(\hat{A}_i) = \mathrm{Var}(R_i)$. Based on the assumption in the main text that the rewards $R$ are independent and identically distributed (i.i.d.), we obtain:
\begin{equation}
\mathrm{Tr}(\mathrm{Var}[\hat{g}]) \propto \mathrm{Var}[\hat{A}]
\end{equation}
This derivation formally proves that reducing the variance of the reward distribution (via DGS) directly minimizes the variance of the optimization update.

\subsection{Proof of Variance Reduction (Theorem~\ref{thm:dgs_variance})}
\label{subsec:proof_variance}

\textbf{Assumption.} Let the terminal reward $R$ and surrogate $\tilde{R}$ follow a bivariate normal distribution:
\begin{equation}
\begin{pmatrix} R \\ \tilde{R} \end{pmatrix} \sim \mathcal{N}\left(\begin{pmatrix}\mu_R \\ \mu_{\tilde{R}}\end{pmatrix}, 
\begin{pmatrix}\sigma_R^2 & \rho\sigma_R\sigma_{\tilde{R}} \\ \rho\sigma_R\sigma_{\tilde{R}} & \sigma_{\tilde{R}}^2\end{pmatrix}\right)
\end{equation}
with $\rho > 0$. Let $\mathcal{E} = \{\tilde{R} > T\}$ be the selection event with $P(\mathcal{E}) = G/N$.

\textbf{Proof.}
We verify the variance of the selected rewards using the Law of Total Variance:
\begin{equation}
\mathrm{Var}[R \mid \mathcal{E}] = \mathbb{E}[\mathrm{Var}(R \mid \tilde{R}) \mid \mathcal{E}] + \mathrm{Var}(\mathbb{E}[R \mid \tilde{R}] \mid \mathcal{E})
\end{equation}

The conditional variance $\mathrm{Var}(R \mid \tilde{R}) = (1-\rho^2)\sigma_R^2$ is constant. Thus, the first term $\mathbb{E}[\mathrm{Var}(R \mid \tilde{R}) \mid \mathcal{E}] = (1-\rho^2)\sigma_R^2$.

For the second term, the conditional expectation is linear: $\mathbb{E}[R \mid \tilde{R}] = \mu_R + \rho\frac{\sigma_R}{\sigma_{\tilde{R}}}(\tilde{R} - \mu_{\tilde{R}})$.
Let $Y$ be the random variable $\tilde{R}$ conditioned on $\tilde{R} > T$ (a truncated normal distribution). The variance of this term is:
\begin{equation}
\mathrm{Var}(\mathbb{E}[R \mid \tilde{R}] \mid \mathcal{E}) = \left( \rho \frac{\sigma_R}{\sigma_{\tilde{R}}} \right)^2 \mathrm{Var}(Y)
\end{equation}
For a standard normal distribution, the variance of the truncated top $p$-quantile satisfies $Var(Z | Z > z_p) \leq p$ for $p \leq 0.5$. This follows from the fact that the conditional variance increases with $p$ and equals $p$ when $p=1$. More precisely, for a normal variable $\tilde{R}$ with mean $\mu_{\tilde{R}}$ and variance $\sigma_{\tilde{R}}^2$, we have:
\begin{equation}
\mathrm{Var}(Y) = \sigma_{\tilde{R}}^2 \cdot \mathrm{Var}\left(\frac{\tilde{R} - \mu_{\tilde{R}}}{\sigma_{\tilde{R}}} \middle| \frac{\tilde{R} - \mu_{\tilde{R}}}{\sigma_{\tilde{R}}} > \frac{T - \mu_{\tilde{R}}}{\sigma_{\tilde{R}}}\right) \leq \sigma_{\tilde{R}}^2 \cdot \frac{G}{N}
\end{equation}
where the inequality holds when $G/N \leq 0.5$ (which is satisfied in our setting with $\frac{G}{N} \approx TPR=0.25 < \frac{1}{2}$). Specifically, $\mathrm{Var}(Y) \leq \frac{G}{N} \sigma_{\tilde{R}}^2$.
Substituting this bound:
\begin{equation}
\mathrm{Var}(\mathbb{E}[R \mid \tilde{R}] \mid \mathcal{E}) \leq \rho^2 \frac{\sigma_R^2}{\sigma_{\tilde{R}}^2} \cdot \frac{G}{N} \sigma_{\tilde{R}}^2 = \rho^2 \sigma_R^2 \frac{G}{N}
\end{equation}

\textit{Synthesis:}
\begin{align}
\mathrm{Var}[R \mid \mathcal{E}] &\leq (1-\rho^2)\sigma_R^2 + \rho^2 \frac{G}{N} \sigma_R^2 \\
&= \sigma_R^2 \left( 1 - \rho^2 \left( 1 - \frac{G}{N} \right) \right)
\end{align}
This proves the inequality in Theorem~\ref{thm:dgs_variance}. \hfill $\square$

\subsection{Proof of SNR Amplification (Corollary~\ref{cor:snr})}
\label{subsec:proof_snr}

The Signal-to-Noise Ratio (SNR) of the gradient estimator is $\mathrm{SNR} = \frac{\|\mathbb{E}[\hat{g}]\|^2}{\mathrm{Tr}(\mathrm{Var}[\hat{g}])}$.

\textbf{Numerator (Signal):}
The expected gradient aligns with the expected reward of the selected samples. Under the bivariate normal assumption, $\mathbb{E}[R \mid \mathcal{E}]$ can be expressed exactly using the \textbf{Inverse Mills Ratio} $\lambda(\alpha) = \frac{\phi(\alpha)}{1-\Phi(\alpha)} > 0$:
\begin{equation}
\mathbb{E}[R \mid \mathcal{E}] = \mu_R + \rho \sigma_R \lambda\left( \frac{T - \mu_{\tilde{R}}}{\sigma_{\tilde{R}}} \right)
\end{equation}
Since $\lambda(\cdot)$ is strictly positive, $\mathbb{E}[R \mid \mathcal{E}] > \mu_R$. Thus, the signal magnitude $\|\mathbb{E}[\hat{g}_{\text{DGS}}]\|^2$ is strictly larger than the baseline $\|\mathbb{E}[\hat{g}_{\text{Base}}]\|^2$.

\textbf{Denominator (Noise):}
From Section~\ref{subsec:proof_variance}, the variance is reduced by a factor $\gamma = 1 - \rho^2(1 - G/N) < 1$.

\textbf{Result:}
\begin{equation}
\mathrm{SNR}_{\text{DGS}} \propto \frac{(\mu_R + \delta)^2}{\gamma \sigma_R^2} > \frac{\mu_R^2}{\sigma_R^2} \cdot \frac{1}{\gamma} = \frac{1}{1 - \rho^2(1 - \frac{G}{N})} \mathrm{SNR}_{\text{Base}}
\end{equation}
This confirms significant SNR amplification. \hfill $\square$

\subsection{Stability of Adaptive Threshold Control}
\label{subsec:stability_analysis}

We analyze the stability of the trend-aware controller defined in Eq.~\ref{eq:threshold}. Let $F_{\tilde{R}}(t)$ denote the cumulative distribution function (CDF) of the surrogate reward $\tilde{R}$ at iteration $k$. The pass rate is $PR_k = 1 - F_{\tilde{R}}(T_k)$. Our goal is for $T_k$ to converge to the value $T^*$ satisfying $PR(T^*) = \text{TPR}$.

\textbf{Linearized Dynamics.}
Assuming $F_{\tilde{R}}$ is differentiable with density $f_{\tilde{R}}$, we can linearize around the equilibrium $T^*$. For small deviations $\delta_k = T_k - T^*$, we have:
\begin{equation}
PR_k \approx \text{TPR} - f_{\tilde{R}}(T^*) \delta_k + \mathcal{O}(\delta_k^2)
\end{equation}
Thus $\Delta_k = PR_k - \text{TPR} \approx -f_{\tilde{R}}(T^*) \delta_k$.

Taking logarithms of the update rule (ignoring the damping term for simplicity):
\begin{equation}
\ln T_{k+1} = (1 + \mu \Delta_k) \ln T_k
\end{equation}
Define $x_k = \ln T_k - \ln T^*$. Linearizing yields:
\begin{equation}
x_{k+1} \approx x_k - \mu f_{\tilde{R}}(T^*) T^* x_k = (1 - \mu f_{\tilde{R}}(T^*) T^*) x_k
\end{equation}
The system is locally stable when $|1 - \mu f_{\tilde{R}}(T^*) T^*| < 1$, i.e., $0 < \mu < \frac{2}{f_{\tilde{R}}(T^*) T^*}$.

\textbf{Role of Trend-Aware Damping.}
The full update includes the indicator $\mathbb{I}[\tau_k \Delta_k < 0]$, which reduces the effective gain when the momentum $\tau_k$ conflicts with the error direction $\Delta_k$. This adaptive gain scheduling prevents oscillatory divergence when the gradient direction reverses due to batch noise. The momentum term $\tau_k$ itself acts as a low-pass filter on threshold changes, smoothing transient fluctuations across the optimization steps.

\textbf{Global Behavior.}
While the linear analysis guarantees local stability, the controller's global performance relies on the quasi-stationarity of the reward distribution $F_{\tilde{R}}$. As the policy $\pi_\theta$ improves, the distribution shifts slowly relative to the controller's adaptation rate. The chosen parameters $(\mu, \eta, \epsilon_0)$ ensure that the threshold $T_k$ tracks the moving $(1-\text{TPR})$-quantile, maintaining a consistent flow of high-quality gradients throughout training.

\section{Experimental Implementation and Additional Analysis}
\label{app:experiments}

\paragraph{Hyperparameters and Training Details.}
We first complete the first two-stage training of UniCTokens~\citep{an2025unictokensboostingpersonalizedunderstanding}, setting the number of learnable tokens as \(K = 16\) and \(M = 8\), and training for 15 epochs in each stage. To manage memory efficiency, the batch size is configured to 4 for understanding tasks and 1 for generation tasks. In the subsequent Sync-GRPO stage, we set the group size to $G=9$ and maintain a balanced 1:1 mixing ratio between Visual Instruction Reasoning and Textual Attribute Reasoning. Detailed hyperparameters are summarized in Table~\ref{hyper}.
\input{tables/hyper}

\paragraph{Baselines.}
To strictly evaluate the efficacy of our proposed framework, we compare against a diverse set of state-of-the-art baselines, including our direct predecessor, unified systems, and specialized models:

\begin{itemize}
    \item \textbf{UniCTokens}~\citep{an2025unictokensboostingpersonalizedunderstanding}: As our foundational architecture, this method employs a multi-stage training protocol. We compare against the original model trained via its standard pipeline to explicitly isolate and verify the performance gains contributed by our proposed Sync-GRPO stage.

    \item \textbf{Yo'chameleon}~\citep{nguyen2025yochameleonpersonalizedvisionlanguage}: A recent unified model capable of personalized understanding and generation. We retrain this model following the specifications in its original study, utilizing a 7B-parameter base model and 1,000 training images per concept to ensure fair comparison.

    \item \textbf{Yo'LLaVA}~\citep{nguyen2024yollavapersonalizedlanguagevision}: A pioneering unified framework addressing personalization via VLMs. Following the original protocol, we train Yo'LLaVA with varying backbone scales (Phi-1.5~\citep{abdin2024phi4technicalreport}, 1.3B and Vicuna~\citep{chiang2023vicuna}, 13B) to ensure a comprehensive comparison.
    
    \item \textbf{MC-LLaVA}~\citep{an2025mcllavamulticonceptpersonalizedvisionlanguage}: Designed to enhance multi-concept personalization, this model utilizes paired textual and visual prompts. It stands as a robust baseline for personalized understanding tasks involving complex concept interactions across various challenging visual domains.
    
    \item \textbf{RAP-MLLM}~\citep{hao2025rapretrievalaugmentedpersonalizationmultimodal}: A retrieval-augmented approach that builds upon RAP-LLaVA. We construct a database for each concept following the RAP-MLLM framework and post-train on a 260K-sample dataset.

    \item \textbf{Textual Inversion}~\citep{gal2022imageworthwordpersonalizing}: A specialized generative technique that projects visual concepts into the text embedding space, enabling the generation of personalized images via standard text-to-image pipelines.

    \item \textbf{DreamBooth}~\citep{ruiz2023dreamboothfinetuningtexttoimage}: A leading fine-tuning method that adapts generative models to user-specific concepts using a small set of reference images. To rigorously assess performance, we evaluate DreamBooth in both low-data (10 images) and high-data (3,000 images) settings.

    \item \textbf{GPT-4o}: We evaluate GPT-4o on our benchmark to establish an empirical upper bound.
\end{itemize}

\paragraph{Evaluation Metrics.}
We adopt a rigorous evaluation protocol covering both personalized understanding and generation tasks. For tasks including personalized recognition, VQA, and standard QA, as well as the fundamental generation Pure Gen. metric in Unifybench, we strictly follow the calculation protocols defined in UniCTokens to ensure direct comparability.
For advanced understanding metrics in UnifyBench++), we adopt distinct evaluation protocols for reasoning tasks. \textbf{Rea.} is quantified via the BLEU score, calculated between the model's generated rationale and the ground-truth \texttt{extra\_info}. For \textbf{Dense Rea.}, we employ GPT-4o as an automated judge to assess the logical coherence and factual accuracy of the generated dense reasoning chains. For generation metrics in Unifybench++, we assess generation quality focusing on visual fidelity and text-image alignment. Visual fidelity is measured across all tasks using CLIP-I similarity between the generated output and the user-provided reference concept image. For prompt alignment, we differentiate by task density: 
(1) Non-dense tasks (\textbf{Rea. Gen.}) utilize CLIP-T scores. Crucially, we substitute abstract reasoning placeholders in the prompt with specific descriptive details before computation to ensure ground-truth alignment. 
(2) Dense tasks (\textbf{Dense Gen.} and \textbf{Dense Rea. Gen.}) employ a fine-grained dense alignment score, where the prompt is segmented by punctuation. GPT-4o validates whether each segment is accurately represented in the image, yielding an average validity ratio in $[0, 1]$ to provide a robust quantitative assessment.

\paragraph{Additional Results and Analysis.}
\input{tables/only}

To validate the robustness of our method beyond our primary benchmarks, we conducted assessments on established datasets specifically designed for pure personalized understanding and generation.

\textbf{Understanding Performance.}
Table~\ref{tab:seperate benchmark} (left) presents the results on the Yo’LLaVA~\citep{nguyen2024yollavapersonalizedlanguagevision} and MC-LLaVA~\citep{an2025mcllavamulticonceptpersonalizedvisionlanguage} datasets. Note that since our current training paradigm focuses on single-concept personalization, we restricted the MC-LLaVA evaluation to the single-concept subset to ensure methodological validity.
Our approach consistently outperforms existing state-of-the-art unified models. Notably, Sync-R1 surpasses the previous best method, UniCTokens, by an average margin of \textbf{5.3\%} across all understanding tasks, demonstrating the efficacy of our reasoning-enhanced optimization in establishing robust multi-modal concept alignment.

\textbf{Generation Performance.}
We further conducted a comprehensive evaluation on Dreambench~\citep{ruiz2023dreamboothfinetuningtexttoimage} and the generation split of the Yo’LLaVA dataset. Our method exhibits superior performance compared to UniCTokens across all metrics. Most significantly, under identical training data constraints, our unified model outperforms the specialized generative baseline, DreamBooth. This indicates that Sync-GRPO successfully bridges the gap between understanding and generation, enabling the synthesis of context-aware images without compromising model versatility.

\section{Extended Qualitative Results}
\label{app:visualization}

In this section, we provide additional visual evidence to demonstrate the personalized generation capabilities of \textbf{Sync-R1}. We specifically showcase results for two distinct concepts, $\langle\text{f\_h}\rangle$ and $\langle\text{butin}\rangle$. As highlighted in Figures~\ref{fig:vis1} and \ref{fig:vis2}, segments requiring high-level inferential reasoning are explicitly marked. 

The visualizations reveal two core strengths of our approach: (1) Reasoning-to-Image Fidelity: The model successfully decodes implicit conceptual requirements into concrete visual attributes, effectively bridging the gap between abstract reasoning and pixel synthesis. (2) Dense Prompt Adherence: Our method exhibits robust alignment even when presented with dense, multi-attribute textual descriptions, ensuring that every segmented instruction is accurately manifested in the final output. These results further validate that the Sync-GRPO stage significantly enhances the model's ability to handle complex, knowledge-driven personalization tasks across diverse visual domains.

\begin{figure*}[ht] 
    \centering
    \begin{minipage}{\linewidth}
        \centering
        \includegraphics[width=0.98\linewidth]{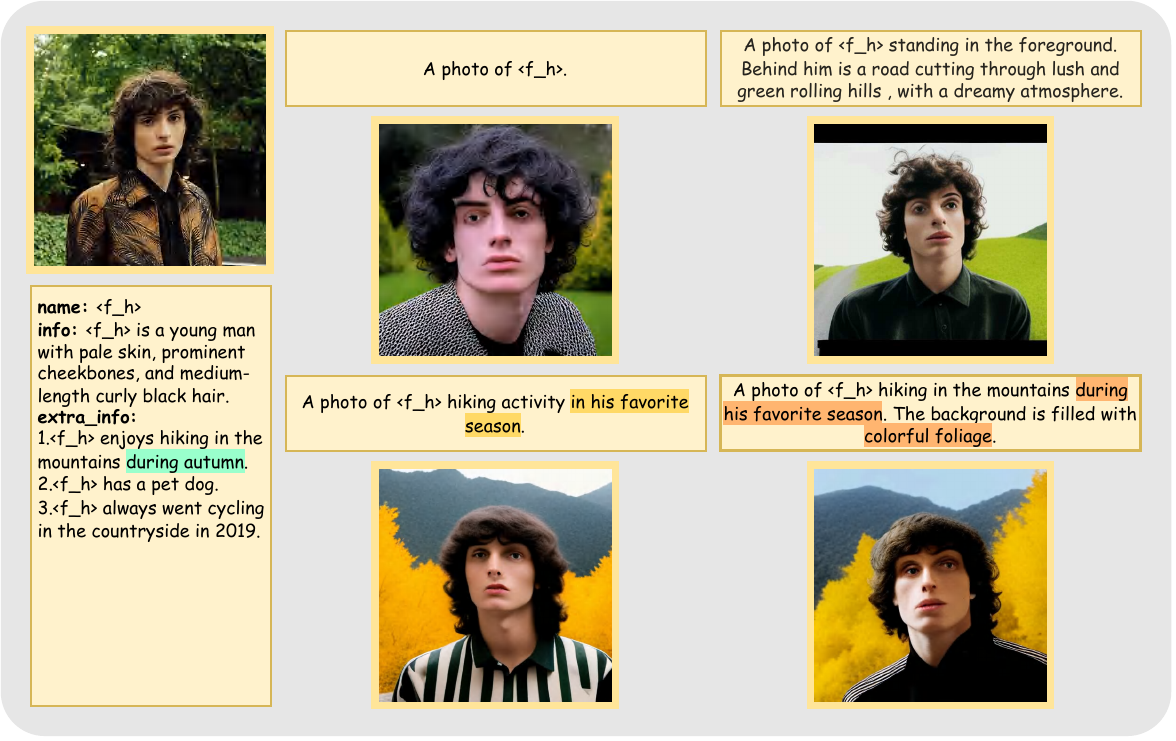}
        \caption{\textbf{Qualitative Visualization of Concept $\langle\text{f\_h}\rangle$}.}
        \label{fig:vis1}
    \end{minipage}
    
    \vspace{20pt} 
    
    \begin{minipage}{\linewidth}
        \centering
        \includegraphics[width=0.98\linewidth]{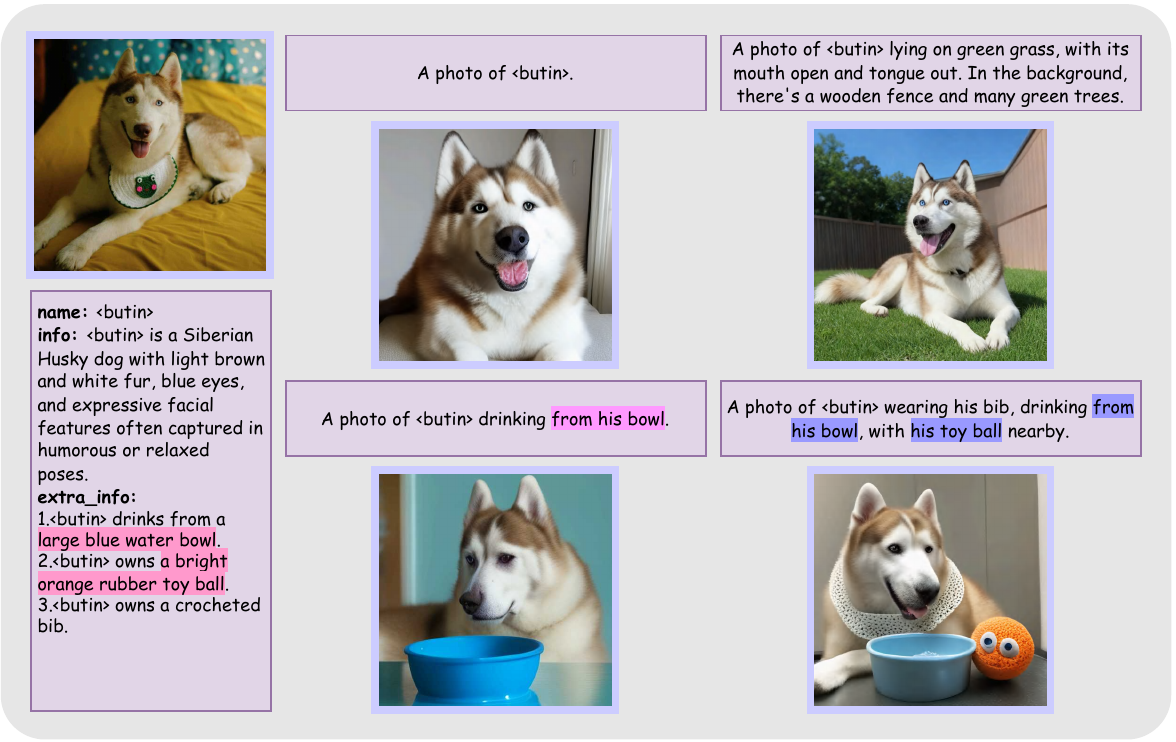}
        \caption{\textbf{Qualitative Visualization of Concept $\langle\text{butin}\rangle$}.}
        \label{fig:vis2}
    \end{minipage}
\end{figure*}

\input{tables/reward}
\section{Details of the Synergistic Reward Ensemble}\label{app:reward_calibration}The effectiveness of \textbf{Sync-GRPO} relies on a multi-faceted reward ensemble that provides dense feedback for both discrete reasoning and high-fidelity generation. This section elaborates on the mathematical formulation of each reward expert and the empirical calibration of their weights to ensure stable convergence.
\subsection{Reward Component Formulations}We employ four distinct reward experts to supervise the multimodal optimization process by providing fine-grained feedback, covering logical consistency, cross-modal alignment, structural identity, and facial fidelity.

\paragraph{Text Inference Evaluation Reward (TIER).}To evaluate the logical consistency of the understanding phase, we utilize ERNIE 3.0~\citep{sun2021ernie30largescaleknowledge} to compute semantic embeddings. Let $v_{IR}$ be the embedding of the model's intermediate reasoning output, and $\{v_{E_i}\}_{i=1}^l$ be the embeddings of $l$ candidate conceptual information segments. We first compute a similarity vector $\mathbf{s} = [s_1, \dots, s_l]$, where $s_i = \text{cos}(v_{IR}, v_{E_i})$. The final TIER is derived from the Euclidean distance $d$ between $\mathbf{s}$ and the binary ground-truth vector $\mathbf{y} \in \{0, 1\}^l$:\begin{equation}R_{\text{TIER}} = \frac{1}{2}(2 - \|\mathbf{s} - \mathbf{y}\|_2)
\end{equation}
Since $\|\mathbf{s} - \mathbf{y}\|_2$ is bounded within $[0, 2]$, $R_{\text{TIER}}$ is normalized to $[0, 1]$.
\paragraph{BLIP Evaluation Reward (BER).}To ensure cross-modal semantic alignment, we leverage BLIP-2~\citep{li2023blip2bootstrappinglanguageimagepretraining} as a vision-language judge. Let $P$ be the text prompt and $I_{\text{gen}}$ be the generated image. BER measures the similarity:\begin{equation}R_{\text{BER}} = \text{cos}(\text{Emb}{\text{text}}(P), \text{Emb}{\text{img}}(I_{\text{gen}}))\end{equation}This reward penalizes semantic drift, ensuring that the generated visual content strictly adheres to the provided prompt.\paragraph{DINOv2 Evaluation Reward (DER).}For structural identity preservation, we utilize DINOv2~\citep{oquab2024dinov2learningrobustvisual}, which is sensitive to fine-grained textural and structural features. DER is calculated as the cosine similarity between the features of the generated image $I_{\text{gen}}$ and the reference image $I_{\text{ref}}$:\begin{equation}R_{\text{DER}} = \text{cos}(\Phi_{\text{DINO}}(I_{\text{gen}}), \Phi_{\text{DINO}}(I_{\text{ref}}))\end{equation}This ensures that the personalized concept maintains its unique visual identity across different generated contexts.\paragraph{Facenet Evaluation Reward (FER).}To achieve high-fidelity facial synthesis, we implement a dedicated facial enhancement reward. We first employ MTCNN~\citep{Zhang_2016} for face alignment and cropping, followed by Facenet~\citep{Schroff_2015,WANG2021215} to extract facial embeddings. The reward is defined as:\begin{equation}R_{\text{FER}} = \text{cos}(\text{Emb}{\text{face}}(I_{\text{gen}}), \text{Emb}{\text{face}}(I_{\text{ref}}))\end{equation}This specific supervisor ensures the preservation of subtle facial features essential for human-centric personalization.\subsection{Weight Calibration and Ablation Study}The final unified reward is a weighted sum $R_{\text{total}} = \sum w_i R_i$, where $\sum w_i = 1$. We conducted a sensitivity analysis to determine the optimal scaling coefficients, as summarized in Table~\ref{tab:reward}.\paragraph{Difficulty-Aware Weighting.}We observe that understanding tasks (requiring complex discrete reasoning) are significantly more challenging to optimize than generative tasks. To balance the gradient contributions, we adopt a \textbf{4:6 ratio} between the understanding component ($w_1$) and the aggregate generative components ($w_2+w_3+w_4$).\paragraph{Domain-Specific Calibration.}Based on empirical results in Table~\ref{tab:reward}, we observed that the inclusion of $R_{\text{FER}}$ is crucial for human subjects but redundant for inanimate objects. Furthermore, while generative rewards primarily benefit generation metrics, they also exert a subtle positive influence on understanding through the unified reasoning-loop of Sync-GRPO. The final optimized weights $W = (w_1, w_2, w_3, w_4)$ are set as:\begin{equation}W =\begin{cases}(0.4, 0.3, 0.3, 0.0), & \text{for Animals/Objects} \\
(0.4, 0.2, 0.2, 0.2), & \text{for Human Subjects}\end{cases}\end{equation}\paragraph{Ablation Analysis.}As shown in Table~\ref{tab:reward}, our proposed ``Synergistic Ensemble'' (TIER+BER+DER+FER) consistently yields the best overall performance. Notably, even when human concepts constitute only half of the entire training dataset, the fine-grained supervision provided by $R_{\text{FER}}$ leads to a substantial increase in overall generation quality without compromising the model's robust performance on other distinct categories.

\end{document}

%% file: tables/main_result.tex
\begin{table*}[h]
    \centering
    \caption{\textbf{Quantitative Results on Unifybench++.} TP = Text Prompt. IP = Image Prompt. Columns with * are newly extended from Unifybench. \colorbox{red!15}{Best} and \colorbox{green!40}{second best} performances are highlighted.}
    \label{tab:main}
    \setlength{\tabcolsep}{0.4mm}
    \renewcommand{\arraystretch}{1.3}
    \begin{adjustbox}{width=\textwidth, center}
        \begin{tabular}{c|c|c||c|c|c|cc|cc||cc|cc|cc|cc}
            \toprule
             \multirow{4}{*}{\textbf{Type}} & \multirow{4}{*}{\textbf{Method}}  &
             \multirow{4}{*}{\begin{tabular}[c]{@{}c@{}} \textbf{Model} \\ \textbf{Size} \end{tabular}} &
            \multicolumn{7}{c||}{\textbf{Und.}} & 
            \multicolumn{8}{c}{\textbf{Gen.}} \\
            \cmidrule(lr){4-10}\cmidrule(lr){11-18}
            & & & \textbf{Rec.} & \textbf{Rea.*} & \textbf{Dense Rea.*} & \multicolumn{2}{c|}{\textbf{VQA}} & \multicolumn{2}{c||}{\textbf{QA}} & \multicolumn{2}{c|}{\textbf{Pure Gen.}} & \multicolumn{2}{c|}{\textbf{Dense Gen.*}} & \multicolumn{2}{c|}{\textbf{Rea. Gen.*}} & \multicolumn{2}{c}{\textbf{Dense Rea. Gen.*}} \\
            \cmidrule(lr){4-10}\cmidrule(lr){11-18}
            & & & \textbf{Weight} & \textbf{BLEU} & \textbf{GPT} & \textbf{BLEU} & \textbf{GPT} & \textbf{BLEU} & \textbf{GPT} & \textbf{CLIP-T} & \textbf{CLIP-I} & \textbf{GPT} & \textbf{CLIP-I} & \textbf{CLIP-T} & \textbf{CLIP-I} & \textbf{GPT} & \textbf{CLIP-I} \\
            \midrule
            \multirow{3}{*}{\begin{tabular}[c]{@{}c@{}} \textbf{Upper} \\ \textbf{Bound} \end{tabular}}
            & \textcolor{gray}{GPT-4o+TP} & \textcolor{gray}{200B} & \textcolor{gray}{0.742} & \textcolor{gray}{0.746} & \textcolor{gray}{0.819} & \textcolor{gray}{0.784} & \textcolor{gray}{0.863} & \textcolor{gray}{0.611} & \textcolor{gray}{0.699} & \textcolor{gray}{0.306} & \textcolor{gray}{0.684} & \textcolor{gray}{0.374} & \textcolor{gray}{0.697} & \textcolor{gray}{0.357} & \textcolor{gray}{0.776} & \textcolor{gray}{0.403} & \textcolor{gray}{0.692} \\
            & \textcolor{gray}{GPT-4o+IP} & \textcolor{gray}{200B} & \textcolor{gray}{0.788} & \textcolor{gray}{0.745} & \textcolor{gray}{0.854} & \textcolor{gray}{0.751} & \textcolor{gray}{0.784} & \textcolor{gray}{0.594} & \textcolor{gray}{0.658} & \textcolor{gray}{0.309} & \textcolor{gray}{0.787} & \textcolor{gray}{0.382} & \textcolor{gray}{0.672} & \textcolor{gray}{0.376} & \textcolor{gray}{0.804} & \textcolor{gray}{0.381} & \textcolor{gray}{0.796} \\           
            & Real Images & - & - & - & - & - & - & - & - & - & 0.833 & - & - & - & - & - & - \\
            \midrule
            \multirow{5}{*}{\begin{tabular}[c]{@{}c@{}} \textbf{Und.} \\ \textbf{Only} \end{tabular}} &
            \textcolor{gray}{Yo'LLaVA} & \textcolor{gray}{13B} & \textcolor{gray}{0.921} & \textcolor{gray}{0.327} & \textcolor{gray}{0.529} & \textcolor{gray}{0.616} & \textcolor{gray}{0.625} & \textcolor{gray}{0.614} & \textcolor{gray}{0.594} & \textcolor{gray}{-} & \textcolor{gray}{-} & \textcolor{gray}{-} & \textcolor{gray}{-} & \textcolor{gray}{-} & \textcolor{gray}{-} & \textcolor{gray}{-} & \textcolor{gray}{-} \\
            & \textcolor{gray}{MC-LLaVA} & \textcolor{gray}{13B} & \textcolor{gray}{0.924} & \textcolor{gray}{0.297} & \textcolor{gray}{0.511} & \textcolor{gray}{0.623} & \textcolor{gray}{0.636} & \textcolor{gray}{0.606} & \textcolor{gray}{0.583} & \textcolor{gray}{-} & \textcolor{gray}{-} & \textcolor{gray}{-} & \textcolor{gray}{-} & \textcolor{gray}{-} & \textcolor{gray}{-} & \textcolor{gray}{-} & \textcolor{gray}{-} \\
             & \textcolor{gray}{RAP-MLLM} & \textcolor{gray}{13B} & \textcolor{gray}{0.936} & \textcolor{gray}{0.332} & \textcolor{gray}{0.595} & \textcolor{gray}{0.624} & \textcolor{gray}{0.617} & \textcolor{gray}{0.712} & \textcolor{gray}{0.723} & \textcolor{gray}{-} & \textcolor{gray}{-} & \textcolor{gray}{-} & \textcolor{gray}{-} & \textcolor{gray}{-} & \textcolor{gray}{-} & \textcolor{gray}{-} & \textcolor{gray}{-} \\
             & \textcolor{gray}{Qwen2.5-VL + TP} & \textcolor{gray}{3B} & \textcolor{gray}{0.669} & \textcolor{gray}{0.218} & \textcolor{gray}{0.341} & \textcolor{gray}{0.404} & \textcolor{gray}{0.726} & \textcolor{gray}{0.579} & \textcolor{gray}{0.770} & \textcolor{gray}{} & \textcolor{gray}{-} & \textcolor{gray}{-} & \textcolor{gray}{-} & \textcolor{gray}{-} & \textcolor{gray}{-} & \textcolor{gray}{-} & \textcolor{gray}{-} \\
            \rowcolor[gray]{0.95}
            & Yo'LLaVA(Phi-1.5) & 1.3B & 0.769 & \cellcolor[gray]{0.95}0.225 & \colorbox{green!40}{\cellcolor[gray]{0.95}0.484} & \cellcolor[gray]{0.95}0.493 & \cellcolor[gray]{0.95}0.493 & \cellcolor[gray]{0.95}0.511 & \cellcolor[gray]{0.95}0.498 & - & - & - & - & - & - & - & - \\
            \midrule
            \multirow{2}{*}{\begin{tabular}[c]{@{}c@{}} \textbf{Gen.}\ \ \textbf{Only} \end{tabular}}
             & \textcolor{gray}{Text inversion} & \textcolor{gray}{1.0B} & \textcolor{gray}{-} & \textcolor{gray}{-} & \textcolor{gray}{-} & \textcolor{gray}{-} & \textcolor{gray}{-} & \textcolor{gray}{-} & \textcolor{gray}{-} & \textcolor{gray}{0.248} & \textcolor{gray}{0.632} & \textcolor{gray}{0.322} & \textcolor{gray}{0.538} & \textcolor{gray}{0.294} & \textcolor{gray}{0.673} & \textcolor{gray}{0.351} & \textcolor{gray}{0.633} \\
             & \textcolor{gray}{DreamBooth (SD)} & \textcolor{gray}{1.0B} & \textcolor{gray}{-} & \textcolor{gray}{-} & \textcolor{gray}{-} & \textcolor{gray}{-} & \textcolor{gray}{-} & \textcolor{gray}{-} & \textcolor{gray}{-} & \textcolor{gray}{0.282} & \textcolor{gray}{0.645} & \textcolor{gray}{0.323} & \textcolor{gray}{0.558} & \textcolor{gray}{0.313} & \textcolor{gray}{0.661} & \textcolor{gray}{0.367} & \textcolor{gray}{0.651} \\
            \midrule
            \multirow{7}{*}{\begin{tabular}[c]{@{}c@{}} \textbf{Unified} \\ \textbf{Model} \end{tabular}} &
            Chamaleon+TP & 7B & 0.685 & 0.206 & 0.313 & 0.411 & 0.489 & 0.509 & 0.560 & 0.186 & 0.542 & 0.281 & 0.463 & 0.193 & 0.516 & 0.309 & 0.549 \\
            & Chameleon+IP & 7B & 0.497 & 0.216 & 0.374 & 0.446 & 0.497 & 0.411 & 0.532 & 0.165 & 0.514 & 0.266 & 0.449 & 0.190 & 0.532 & 0.299 & 0.503 \\
            & Show-o+TP & 1.3B & 0.562 & 0.217 & 0.337 & 0.462 & 0.412 & 0.507 & 0.579 & 0.263 & 0.663 & \colorbox{green!40}{0.299} & 0.558 & 0.235 & 0.679 & \colorbox{green!40}{0.341} & 0.660 \\
            & Yo'Chameleon & 7B & 0.764 & 0.231 & {0.399} & 0.470 & 0.511 & 0.506 & 0.582 & {0.235} & 0.697 & 0.289 & 0.601 & 0.273 & 0.704 & 0.321 & 0.702 \\
            & UniCTokens & 1.3B & \colorbox{green!40}{0.792} & \colorbox{green!40}{0.238} & 0.385 & \colorbox{green!40}{0.505} & \colorbox{green!40}{0.521} & \colorbox{green!40}{0.546} & \colorbox{green!40}{0.601} & \colorbox{green!40}{0.280} & \colorbox{green!40}{0.750} & 0.298 & \colorbox{green!40}{0.639} & \colorbox{green!40}{0.282} & \colorbox{green!40}{0.762} & 0.317 & \colorbox{green!40}{0.712} \\
            & Sync-R1 & 1.3B & \colorbox{red!15}{0.859} & \colorbox{red!15}{0.250} & \colorbox{red!15}{0.503} & \colorbox{red!15}{0.592} & \colorbox{red!15}{0.606} & \colorbox{red!15}{0.604} & \colorbox{red!15}{0.652} & \colorbox{red!15}{0.308} & \colorbox{red!15}{0.765} & \colorbox{red!15}{0.337} & \colorbox{red!15}{0.645} & \colorbox{red!15}{0.324} & \colorbox{red!15}{0.801} & \colorbox{red!15}{0.353} & \colorbox{red!15}{0.756}  \\
            \midrule
            \bottomrule
        \end{tabular}
    \end{adjustbox}
\end{table*}

\begin{figure*}[ht]
\begin{center}
   \includegraphics[width=1.0\textwidth]
   {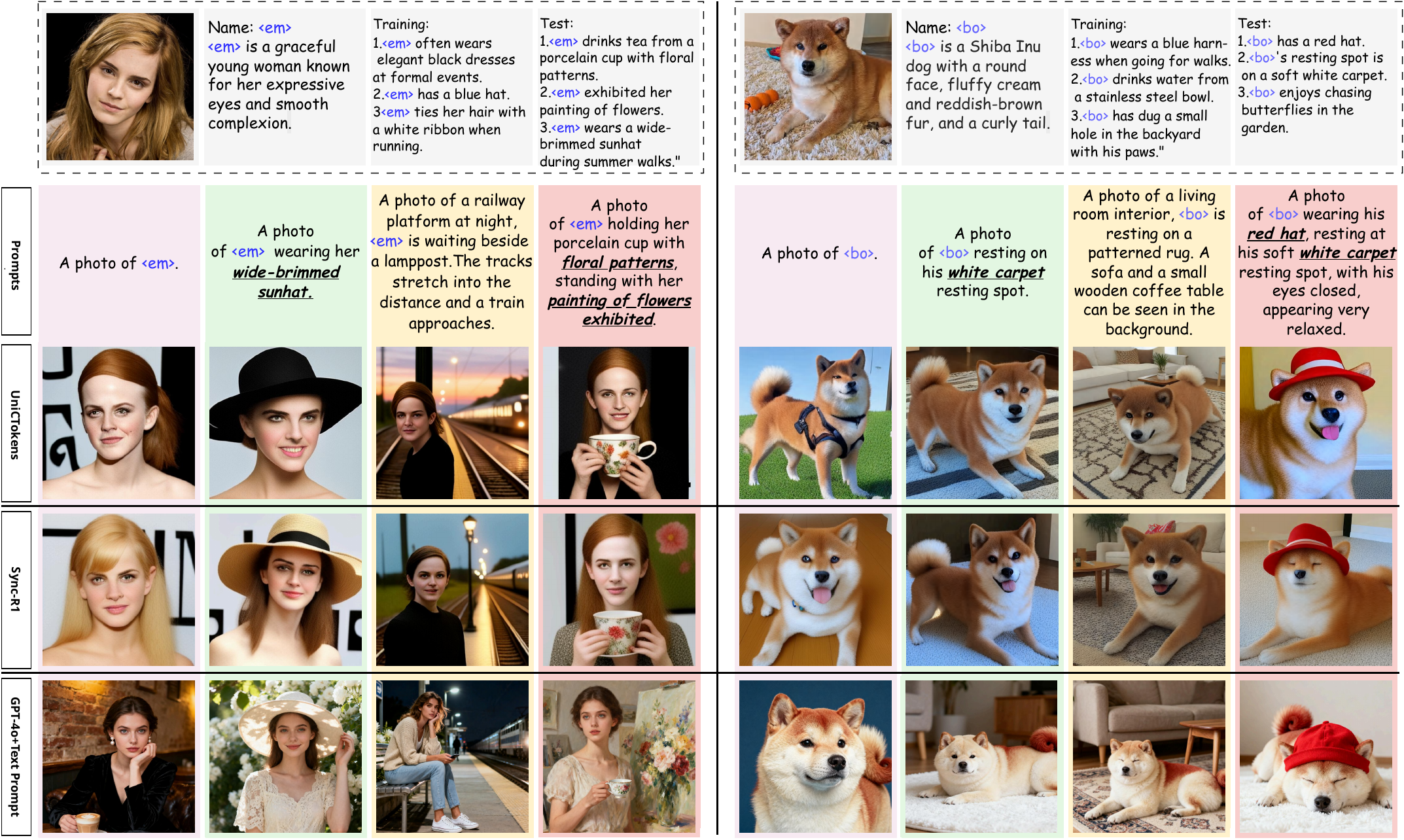}
\end{center}
\caption{\textbf{Qualitative Comparison.} We present a visual comparison between Sync-R1 and baseline methods. The underlined text highlights specific information that requires the model to infer from the personalized context correctly.}
\label{showcase}
\end{figure*}

%% file: tables/ablation_1.tex
\begin{table*}[t]
    \centering
    \caption{\textbf{Ablation study on the effectiveness of the two reasoning part and DGS method.}}
    \label{tab:abl1}
    \setlength{\tabcolsep}{0.4mm}
    \renewcommand{\arraystretch}{1.3}
    \begin{adjustbox}{width=\textwidth, center}
        \begin{tabular}{c||c|c|c|cc|cc||cc|cc|cc|cc}
            \toprule
            \multirow{4}{*}{\textbf{Method}} & \multicolumn{7}{c||}{\textbf{Und.}} & \multicolumn{8}{c}{\textbf{Gen.}} \\
            \cmidrule(lr){2-8}\cmidrule(lr){9-16}
            & \textbf{Rec.} & \textbf{Rea.*} & \textbf{Dense Rea.*} & \multicolumn{2}{c|}{\textbf{VQA}} & \multicolumn{2}{c||}{\textbf{QA}} & \multicolumn{2}{c|}{\textbf{Pure Gen.}} & \multicolumn{2}{c|}{\textbf{Dense Gen.*}} & \multicolumn{2}{c|}{\textbf{Rea. Gen.*}} & \multicolumn{2}{c}{\textbf{Dense Rea. Gen.*}} \\
            \cmidrule(lr){2-8}\cmidrule(lr){9-16}
            & \textbf{Weight} & \textbf{BLEU} & \textbf{GPT} & \textbf{BLEU} & \textbf{GPT} & \textbf{BLEU} & \textbf{GPT} & \textbf{CLIP-T} & \textbf{CLIP-I} & \textbf{GPT} & \textbf{CLIP-I} & \textbf{CLIP-T} & \textbf{CLIP-I} & \textbf{GPT} & \textbf{CLIP-I} \\
            \midrule
            w/o Und.Rea. & 0.808 & 0.224 & 0.427 & 0.511 & 0.519 & 0.549 & 0.621 & 0.301 & 0.762 & 0.323 & 0.637 & 0.289 & 0.765 & 0.321 & 0.727 \\
            w/o VIA & 0.823 & 0.239 & 0.489 & 0.587 & 0.591 & 0.571 & 0.626 & 0.299 & 0.759 & 0.306 & 0.641 & 0.301 & 0.799 & 0.343 & 0.741 \\
            w/o TIA & 0.855 & 0.222 & 0.431 & 0.572 & 0.588 & 0.597 & 0.646 & 0.305 & 0.757 & 0.319 & 0.639 & 0.291 & 0.773 & 0.329 & 0.728 \\
            w/o DGS & 0.848 & 0.243 & 0.498 & 0.590 & 0.583 & 0.607 & 0.643 & 0.305 & 0.766 & 0.331 & 0.632 & 0.307 & 0.796 & 0.338 & 0.739 \\
            Sync-R1 & 0.859 & 0.250 & 0.503 & 0.592 & 0.606 & 0.604 & 0.652 & 0.308 & 0.765 & 0.337 & 0.645 & 0.324 & 0.801 & 0.353 & 0.756  \\
            \midrule
            \bottomrule
        \end{tabular}
    \end{adjustbox}
\end{table*}
\begin{table*}[t]
    \centering
    \caption{\textbf{Ablation study on different initial methods.}}
    \label{tab:abl2}
    \setlength{\tabcolsep}{0.4mm}
    \renewcommand{\arraystretch}{1.3}
    \begin{adjustbox}{width=\textwidth, center}
        \begin{tabular}{c||c|c|c|cc|cc||cc|cc|cc|cc}
            \toprule
            \multirow{4}{*}{\textbf{Method}} & \multicolumn{7}{c||}{\textbf{Und.}} & \multicolumn{8}{c}{\textbf{Gen.}} \\
            \cmidrule(lr){2-8}\cmidrule(lr){9-16}
            & \textbf{Rec.} & \textbf{Rea.*} & \textbf{Dense Rea.*} & \multicolumn{2}{c|}{\textbf{VQA}} & \multicolumn{2}{c||}{\textbf{QA}} & \multicolumn{2}{c|}{\textbf{Pure Gen.}} & \multicolumn{2}{c|}{\textbf{Dense Gen.*}} & \multicolumn{2}{c|}{\textbf{Rea. Gen.*}} & \multicolumn{2}{c}{\textbf{Dense Rea. Gen.*}} \\
            \cmidrule(lr){2-8}\cmidrule(lr){9-16}
            & \textbf{Weight} & \textbf{BLEU} & \textbf{GPT} & \textbf{BLEU} & \textbf{GPT} & \textbf{BLEU} & \textbf{GPT} & \textbf{CLIP-T} & \textbf{CLIP-I} & \textbf{GPT} & \textbf{CLIP-I} & \textbf{CLIP-T} & \textbf{CLIP-I} & \textbf{GPT} & \textbf{CLIP-I} \\
            \midrule
            Joint & 0.712 & 0.224 & 0.387 & 0.489 & 0.482 & 0.529 & 0.577 & 0.264 & 0.680 & 0.261 & 0.607 & 0.266 & 0.724 & 0.288 & 0.653 \\
            Joint+Sync-GRPO & 0.789 & 0.231 & 0.468 & 0.548 & 0.557 & 0.581 & 0.629 & 0.287 & 0.695 & 0.291 & 0.616 & 0.303 & 0.757 & 0.322 & 0.689 \\
            UniCTokens & 0.792 & 0.238 & 0.385 & 0.505 & 0.521 & 0.546 & 0.601 & 0.280 & 0.750 & 0.298 & 0.639 & 0.282 & 0.762 & 0.317 & 0.712 \\
            Sync-R1 & 0.859 & 0.250 & 0.503 & 0.592 & 0.606 & 0.604 & 0.652 & 0.308 & 0.765 & 0.337 & 0.645 & 0.324 & 0.801 & 0.353 & 0.756  \\
            \midrule
            \bottomrule
        \end{tabular}
    \end{adjustbox}
\end{table*}
\begin{figure*}[h!]
\begin{center}
   \includegraphics[width=1.0\textwidth]
   {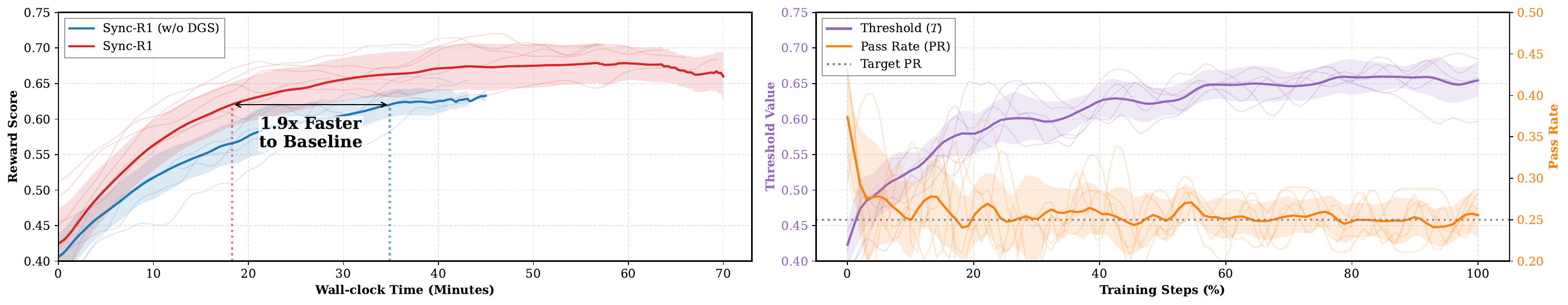}
\end{center}
\caption{\textbf{Efficiency and Dynamics of DGS.} 
\textbf{(a) Convergence Efficiency:} \textbf{Sync-R1} (Red) reaches 98\% of the baseline's peak performance approx. \textbf{1.9$\times$ faster} in wall-clock time, confirming that high sample efficiency outweighs the selection overhead. 
\textbf{(b) Adaptive Dynamics:} The threshold $T$ (Purple) dynamically adapts to pass rate fluctuations (Orange) to anchor the selection ratio at $\text{TPR}$, ensuring robust training stability across diverse concepts.}
  \label{fig:dgs_analysis}
\end{figure*}

%% file: tables/hyper.tex
\begin{table*}[ht]
\centering
\caption{Sync-R1 training hyperparameters.}
\begin{tabular}{lclc}
\toprule
\textbf{Name} &  \textbf{Value} &\textbf{Name} &  \textbf{Value}\\
\midrule
Group Size $G$ & 9 &
Learning rate $lr$ & 1e-6 \\
$\beta$ & 0.01 &
Classifier-Free Guidance Scale & 5 \\
Batchsize & 1 &
Max Gradient Norm & 1.0 \\
Training Steps & 100 &
Image Resolution $h \times w$ & $512 \times 512$ \\
$\alpha_{text}$ & 0.4 & $\alpha_{image}$ & 0.6\\
$TPR$ & 0.25 & $\mu$ & 0.12 \\
$\eta$ & 0.8 & $\varepsilon_0$ & 2\\
\bottomrule
\end{tabular}
\label{hyper}
\end{table*}

%% file: tables/only.tex
\begin{table*}[h]
\setlength{\tabcolsep}{0.3mm}
\renewcommand{\arraystretch}{1.3}
    \centering
    \caption{\centering \textbf{Performance on Personalized Understanding and Generation Benchmarks.} TP = Text Prompt. IP = Image Prompt.}
    \begin{minipage}[t]{0.45\textwidth}  
        \centering
        \footnotesize
        \begin{adjustbox}{width=\textwidth, center}
            \begin{tabular}{c|c||ccc|ccc}  
                \toprule
                \multirow{4}{*}{\textbf{Type}} & \multirow{4}{*}{\textbf{Method}} & 
                \multicolumn{3}{c|}{\textbf{Yo'LLaVA}} & 
                \multicolumn{3}{c}{\textbf{MC-LLaVA}} \\
                \cmidrule(lr){3-5}\cmidrule(lr){6-8}  
                & & \textbf{Rec} & \textbf{VQA} & \textbf{QA} & \textbf{Rec} & \textbf{VQA} & \textbf{QA} \\
                \cmidrule(lr){3-5}\cmidrule(lr){6-8}  
                & & \textbf{Weight} & \textbf{Acc} & \textbf{Acc} & \textbf{Weight} & \textbf{BLEU} & \textbf{Acc} \\
                \midrule
                \multirow{5}{*}{\begin{tabular}[c]{@{}c@{}} \textbf{Und.} \\ \textbf{Only} \end{tabular}} &
                \textcolor{gray}{LLaVA+TP} & \textcolor{gray}{0.807} & \textcolor{gray}{0.921} & \textcolor{gray}{0.815} & \textcolor{gray}{0.608} & \textcolor{gray}{0.412} & \textcolor{gray}{0.609}\\
                \textcolor{gray}{} & \textcolor{gray}{Yo'LLaVA} & \textcolor{gray}{0.932} & \textcolor{gray}{0.918} & \textcolor{gray}{0.897} & \textcolor{gray}{0.829} & \textcolor{gray}{0.657} & \textcolor{gray}{0.691} \\
                \textcolor{gray}{} & \textcolor{gray}{MC-LLaVA} & \textcolor{gray}{0.953} & \textcolor{gray}{0.935} & \textcolor{gray}{0.924} & \textcolor{gray}{0.901} & \textcolor{gray}{0.692} & \textcolor{gray}{0.736}\\
                \textcolor{gray}{} & \textcolor{gray}{Qwen2.5-VL+TP} & \textcolor{gray}{0.685} & \textcolor{gray}{0.861} & \textcolor{gray}{0.697} & \textcolor{gray}{0.634} & \textcolor{gray}{0.436} & \textcolor{gray}{0.549} \\
                \rowcolor[gray]{0.95}
                \textcolor{gray}{} & {Yo'LLaVA(Phi-1.5)} & {0.805} & {0.627} & {0.713} &{0.726} & {0.525} & {0.591} \\ 
                \midrule
                \multirow{4}{*}{\begin{tabular}[c]{@{}c@{}} \textbf{Unified} \\ \textbf{Model} \end{tabular}}
                & Chameleon+TP & 0.715 & 0.537 & 0.702 & 0.649 & 0.408 & 0.675 \\
                & Yo'Chameleon & 0.832 & 0.591 & 0.734 & 0.753 & 0.610 & 0.658\\
                & Show-o+TP & 0.704 & 0.526 & 0.605 & 0.589 & 0.574 & 0.482\\
                & UniCTokens & 0.865 & 0.602 & 0.725 & 0.742 & 0.617 & 0.692 \\
                & Sync-R1 & 0.926 & 0.663 & 0.737 & 0.773 & 0.671 & 0.704 \\
                \bottomrule
            \end{tabular}
        \end{adjustbox}
        \label{tab:model_performance}
    \end{minipage}%
    \hspace{0.04\textwidth}
    \begin{minipage}[t]{0.40\textwidth}  
        \centering
        \footnotesize
        \begin{adjustbox}{width=\textwidth, center}
            \begin{tabular}{c|c||c|c|c}  
                \toprule
                \multirow{3}{*}{\textbf{Type}}&\multirow{3}{*}{\textbf{Method}} & \multicolumn{2}{c|}{\textbf{DreamBench}} & \textbf{Yo'LLaVA} \\
                \cmidrule(lr){3-4}\cmidrule(lr){5-5}  
                & &\textbf{CLIP - I} & \textbf{CLIP - T} & \textbf{CLIP - I} \\
                \midrule
                \multirow{4}{*}{\begin{tabular}[c]{@{}c@{}} \textbf{Gen.} \\ \textbf{Only} \end{tabular}}
                & \textcolor{gray}{Real Images} & \textcolor{gray}{0.873} & \textcolor{gray}{-} & \textcolor{gray}{0.864} \\
                & \textcolor{gray}{DreamBooth} & \textcolor{gray}{0.692} & \textcolor{gray}{0.297} & \textcolor{gray}{0.645} \\
                & \textcolor{gray}{DreamBooth} & \textcolor{gray}{0.815} & \textcolor{gray}{0.293} & \textcolor{gray}{0.788} \\
                & \textcolor{gray}{Text inversion} & \textcolor{gray}{0.675} & \textcolor{gray}{0.284} & \textcolor{gray}{0.632} \\
                \midrule
                \multirow{5}{*}{\begin{tabular}[c]{@{}c@{}} \textbf{Unified} \\ \textbf{Model} \end{tabular}}
                & Chameleon+TP & 0.612 & 0.168 & 0.579 \\
                & Chameleon+IP & 0.594 & 0.171 & 0.499 \\
                & Show-o+TP & 0.678 & {0.235} & 0.652 \\
                & Yo'Chameleon & 0.807 & 0.212 & 0.771 \\
                & UniCTokens & 0.788 & {0.299} & 0.806 \\
                & Sync-R1 & 0.808 & 0.323 & 0.822\\
                \bottomrule
            \end{tabular}
        \end{adjustbox}
    \end{minipage}
    \label{tab:seperate benchmark}
    \vspace{-2mm}
\end{table*}

%% file: tables/reward.tex
\begin{table*}[h]
    \centering
    \caption{\textbf{Ablation study on different reward ensembles.}}
    \setlength{\tabcolsep}{0.4mm}
    \renewcommand{\arraystretch}{1.3}
    \begin{adjustbox}{width=\textwidth, center}
        \begin{tabular}{c||c|c|c|cc|cc||cc|cc|cc|cc}
            \toprule
            \multirow{4}{*}{\textbf{Reward Ensembles}} & \multicolumn{7}{c||}{\textbf{Und.}} & \multicolumn{8}{c}{\textbf{Gen.}} \\
            \cmidrule(lr){2-8}\cmidrule(lr){9-16}
            & \textbf{Rec.} & \textbf{Rea.*} & \textbf{Dense Rea.*} & \multicolumn{2}{c|}{\textbf{VQA}} & \multicolumn{2}{c||}{\textbf{QA}} & \multicolumn{2}{c|}{\textbf{Pure Gen.}} & \multicolumn{2}{c|}{\textbf{Dense Gen.*}} & \multicolumn{2}{c|}{\textbf{Rea. Gen.*}} & \multicolumn{2}{c}{\textbf{Dense Rea. Gen.*}} \\
            \cmidrule(lr){2-8}\cmidrule(lr){9-16}
            & \textbf{Weight} & \textbf{BLEU} & \textbf{GPT} & \textbf{BLEU} & \textbf{GPT} & \textbf{BLEU} & \textbf{GPT} & \textbf{CLIP-T} & \textbf{CLIP-I} & \textbf{GPT} & \textbf{CLIP-I} & \textbf{CLIP-T} & \textbf{CLIP-I} & \textbf{GPT} & \textbf{CLIP-I} \\
            \midrule
            TIER+BER & 0.838 & 0.245 & 0.494 & 0.582 & 0.583 & 0.594 & 0.644 & 0.298 & 0.739 & 0.329 & 0.636 & 0.311 & 0.784 & 0.326 & 0.743 \\
            TIER+DER & 0.849 & 0.239 & 0.487 & 0.593 & 0.588 & 0.594 & 0.639 & 0.297 & 0.733 & 0.323 & 0.638 & 0.306 & 0.779 & 0.321 & 0.738 \\
            TIER+BER+DER & 0.852 & 0.252 & 0.491 & 0.585 & 0.591 & 0.603 & 0.631 & 0.306 & 0.748 & 0.327 & 0.640 & 0.318 & 0.787 & 0.331 & 0.741 \\
            TIER+BER+DER+FER & 0.859 & 0.250 & 0.503 & 0.592 & 0.606 & 0.604 & 0.652 & 0.308 & 0.765 & 0.337 & 0.645 & 0.324 & 0.801 & 0.353 & 0.756  \\
            \midrule
            \bottomrule
        \end{tabular}
    \end{adjustbox}
    \vspace{2mm}
    \label{tab:reward}
    \vspace{-4mm}
\end{table*}